\pdfoutput=1
\documentclass[11pt]{article} \usepackage{times}
\usepackage[letterpaper, left=1in, right=1in, top=1in, bottom=1in]{geometry}

\usepackage[utf8]{inputenc} \usepackage[T1]{fontenc}    \usepackage{url}            \usepackage{booktabs}       \usepackage{amsfonts}       \usepackage{nicefrac}       \usepackage{microtype}      \usepackage{mkolar_definitions}
\usepackage{xspace}
\usepackage{amsmath}
\usepackage{algorithm,algorithmic}
\usepackage{color}
\usepackage{enumitem}
\usepackage{comment}
\usepackage{bm}
\usepackage{graphicx}

\usepackage{apptools}
\usepackage[page, header]{appendix}
\usepackage{titletoc}

\usepackage[scaled=.9]{helvet}

\usepackage{url}
\usepackage{authblk}
\usepackage{caption}

\usepackage[dvipsnames]{xcolor}
\usepackage[colorlinks=true, linkcolor=blue, citecolor=blue]{hyperref}

\usepackage{tikz}
\usetikzlibrary{decorations.pathreplacing}
\usepackage{wrapfig}

\newcommand{\magic}{\textsc{Magic}\xspace}

\newcommand{\bias}{\textsc{b}\xspace}
\newcommand{\biast}{\textsc{bias}\xspace}
\newcommand{\dev}{\textsc{dev}\xspace}
\newcommand{\var}{\textsc{var}\xspace}
\newcommand{\const}{\textsc{const}\xspace}

\newcommand{\conf}{\textsc{cnf}\xspace}

\newcommand{\pilog}{\pi_{\mathrm{L}}}
\newcommand{\pitarget}{\pi_{\mathrm{T}}}

\newcommand{\vdr}{\hat{V}_{\mathrm{DR}}}
\newcommand{\vdm}{\hat{V}_{\mathrm{DM}}}
\newcommand{\vmax}{V_{\mathrm{max}}}
\newcommand{\pmax}{p_{\mathrm{max}}}
\newcommand{\pmin}{p_{\mathrm{min}}}

\newcommand{\monoconst}{\kappa}

\newcommand{\lepski}{\textsc{Slope}\xspace}
 \newcommand{\version}{arxiv}

\usepackage{natbib}
\bibliographystyle{plainnat}
\bibpunct{(}{)}{;}{a}{,}{,}

\newenvironment{packed_enum}{
  \begin{enumerate}
    \setlength{\itemsep}{1pt}
    \setlength{\parskip}{-1pt}
    \setlength{\parsep}{0pt}
}{\end{enumerate}}
\newenvironment{packed_itemize}{
  \begin{itemize}
    \setlength{\itemsep}{1pt}
    \setlength{\parskip}{-1pt}
    \setlength{\parsep}{0pt}
}{\end{itemize}}

\newcount\Comments  \Comments=1 \definecolor{darkgreen}{rgb}{0,0.5,0}
\definecolor{darkred}{rgb}{0.7,0,0}
\definecolor{teal}{rgb}{0.3,0.8,0.8}
\definecolor{orange}{rgb}{1.0,0.5,0.0}
\definecolor{purple}{rgb}{0.8,0.0,0.8}
\newcommand{\kibitz}[2]{\ifnum\Comments=1{\textcolor{#1}{\textsf{\footnotesize #2}}}\fi}

\usepackage{authblk}

\title{Adaptive Estimator Selection for Off-Policy Evaluation}
\date{}
\author[1]{Yi Su\thanks{ys756@cornell.edu}}
\author[2]{Pavithra Srinath\thanks{pavithraks@gmail.com}}
\author[2]{Akshay Krishnamurthy\thanks{akshaykr@microsoft.com}}
\affil[1]{Cornell University, Ithaca, NY}
\affil[2]{Microsoft Research, New York, NY}

\begin{document}

\maketitle

\vspace{-1cm}
\begin{abstract}
We develop a generic data-driven method for estimator selection in
off-policy policy evaluation settings. We establish a strong
performance guarantee for the method, showing that it is competitive
with the oracle estimator, up to a constant factor. Via in-depth case
studies in contextual bandits and reinforcement learning, we
demonstrate the generality and applicability of the method. We also
perform comprehensive experiments, demonstrating the empirical
efficacy of our approach and comparing with related approaches. In
both case studies, our method compares favorably with existing
methods.
 \end{abstract}

\section{Introduction}
\label{sec:intro}

In practical scenarios where safety, reliability, or performance is a
concern, it is typically infeasible to directly deploy a reinforcement
learning (RL) algorithm, as it may compromise these desiderata. This
motivates research on \emph{off-policy evaluation}, where we use data
collected by a (presumably safe) logging policy to estimate the
performance of a given target policy, without ever deploying it. These
methods help determine if a policy is suitable for deployment at
minimal cost and, in addition, serve as the statistical foundations of
sample-efficient policy optimization algorithms. In light of the
fundamental role off-policy evaluation plays in RL, it has been the
subject of intense research over the last several
decades~\citep{horvitz1952generalization,dudik2014doubly,swaminathan2017off,kallus2018policy,sutton1988learning,bradtke1996linear,precup2000eligibility,jiang2015doubly,thomas2016data,farajtabar2018more,liu2018breaking,voloshin2019empirical}.

As many off-policy estimators have been developed, practitioners face
a new challenge of choosing the best estimator for their
application. This selection problem is critical to high quality
estimation as has been demonstrated in recent empirical
studies~\cite{voloshin2019empirical}. However, data-driven estimator
selection in these settings is fundamentally different from
hyperparameter optimization or model selection for supervised
learning. In particular, cross validation or bound minimization
approaches fail because there is no unbiased and low variance approach
to compare estimators. As such, the current best practice for
estimator selection is to leverage domain expertise or follow
guidelines from the literature~\citep{voloshin2019empirical}.

Domain knowledge can suggest a particular form of estimator, but a
second selection problem arises, as many estimators themselves have
hyperparameters that must be tuned. In most cases, these
hyperparameters modulate a bias-variance tradeoff, where at one
extreme the estimator is unbiased but has high variance, and at the
other extreme the estimator has low variance but potentially high
bias. Hyperparameter selection is critical to performance, but
high-level prescriptive guidelines are less informative for these
low-level selection problems. We seek a data-driven approach.

In this paper, we study the estimator-selection question for
off-policy evaluation. We provide a general technique, that we call
\lepski, that applies to a broad family of estimators, across several
distinct problem settings. On the theoretical side, we prove that the
selection procedure is competitive with oracle tuning, establishing an
\emph{oracle inequality}. To demonstrate the generality of our
approach, we study two applications in detail: (1) bandwidth selection
in contextual bandits with continuous actions, and (2) horizon
selection for ``partial importance weighting estimators'' in RL. In
both examples, we prove that our theoretical results apply, and we
provide a comprehensive empirical evaluation. In the contextual
bandits application, our selection procedure is competitive with the
skyline oracle tuning (which is unimplementable in practice) and
outperforms any fixed parameter in aggregate across experimental
conditions.  In the RL application, our approach substantially
outperforms standard baselines including
\magic~\citep{thomas2016data}, the only comparable estimator selection
method.

\ifthenelse{\equal{\version}{arxiv}}{
\begin{wrapfigure}{R}{0.5\textwidth}
\vspace{-0.8cm}
\begin{center}
\includegraphics[width=0.47\textwidth]{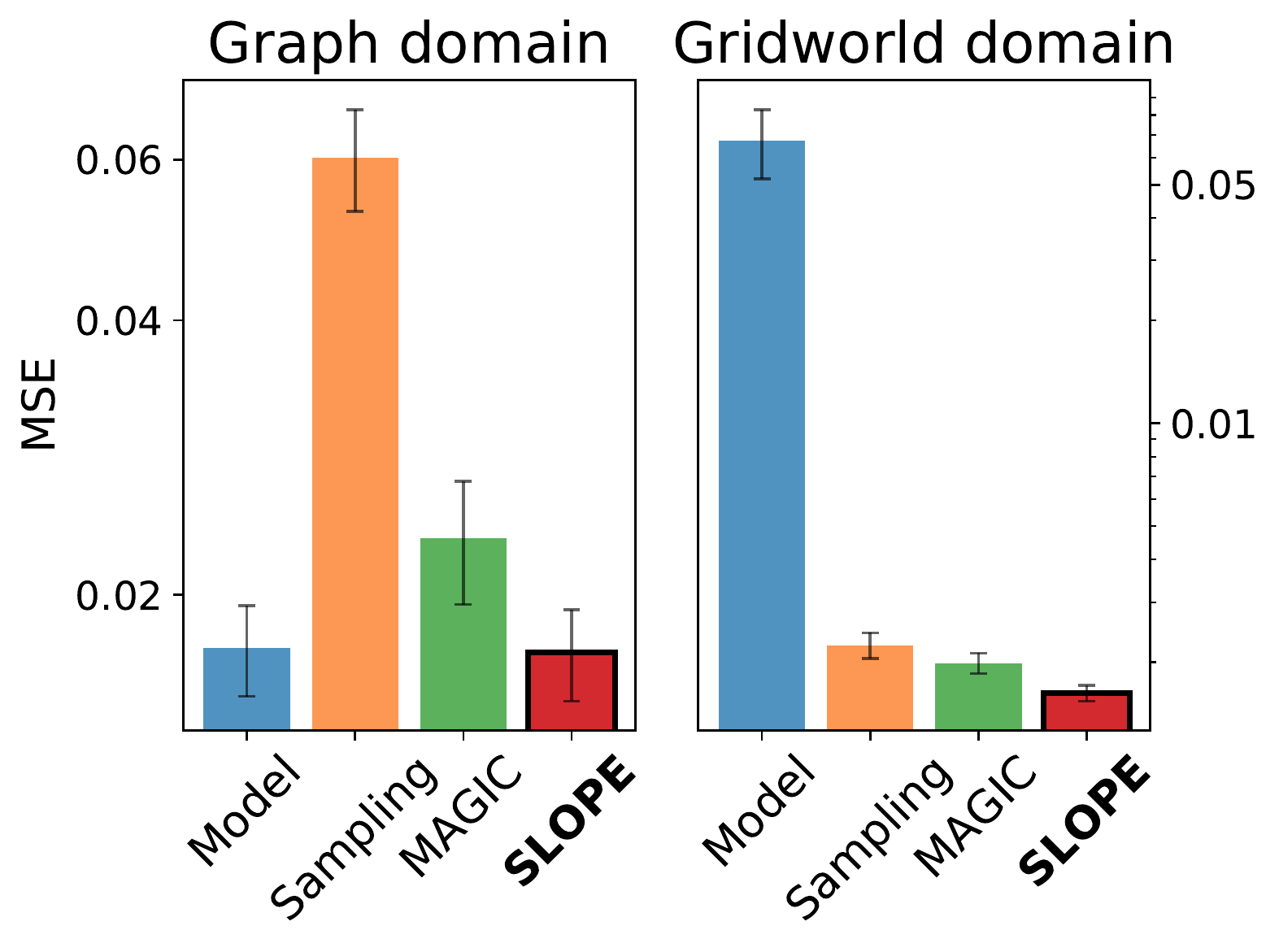}
\end{center}
\vspace{-0.6cm}
\caption{Representative experiments with \lepski. \lepski is consistently one of the best
  approaches, regardless of whether model-based or importance-sampling
  based estimators are better.}
\label{fig:intro_exp}
\vspace{-0.3cm}
\end{wrapfigure}
}{
\begin{figure}
\begin{center}
\includegraphics[width=0.8\columnwidth]{bar_plots.pdf}
\end{center}
\vspace{-0.5cm}
\caption{Representative experiments with \lepski. \lepski is consistently one of the best
  approaches, regardless of whether model-based or importance-sampling
  based estimators are better.}
\label{fig:intro_exp}
\vspace{-0.4cm}
\end{figure}
}

A representative experimental result for the RL setting is displayed
in~\pref{fig:intro_exp}. Here we consider two different domains
from~\citet{voloshin2019empirical} and compare our new estimator,
\lepski, with well-known baselines. Our method selects a false horizon
$\eta$, uses an unbiased importance sampling approach up to horizon
$\eta$, and then prematurely terminates the episode with a value
estimate from a parametric estimator (in this case trained via Fitted
Q iteration). Model selection focuses on choosing the false horizon
$\eta$, which yields parametric and trajectory-wise importance
sampling estimators as special cases (``Model'' and ``sampling'' in
the figure). Our experiments show that regardless of which of these
approaches dominates, \lepski is competitive with the
best approach. Moreover, it outperforms \magic, the only other tuning
procedure for this setting.~\pref{sec:rl} contains more details and
experiments.

At a technical level, the fundamental challenge with estimator
selection is that there is no unbiased and low-variance approach for
comparing parameter choices. This precludes the use of cross
validation and related approaches, as estimating the error of a method
is itself an off-policy evaluation problem!
Instead, adapting ideas from nonparametric
statistics~\citep{lepski1997optimal,mathe2006lepskii}, our selection
procedure circumvents this error estimation problem by only using
variance estimates, which are easy to obtain. At a high level, we use
confidence bands for each estimator around their (biased) expectation
to find one that approximately balances bias and variance. This
balancing corresponds to the oracle choice, and so we obtain our
performance guarantee.

\paragraph{Related work.}
As mentioned, off-policy evaluation is a vibrant research area with
contributions from machine learning, econometrics, and statistics
communities.
Two settings of particular interest are contextual bandits and general
RL. For the former, recent and classical references
include~\cite{horvitz1952generalization,dudik2014doubly,hirano2003efficient,farajtabar2018more,su2019doubly}. For
the latter, please refer to~\citet{voloshin2019empirical}.

Parameter tuning is quite important for many off-policy evaluation
methods.~\citet{munos2016safe} observe that methods like
\textsc{Retrace} are fairly sensitive to the hyperparameter. Similarly
conclusions can be drawn from the experiments of~\citet{su2019cab} in
the contextual bandits context. Yet,
when tuning is required, most works resort to heuristics.  For
example, in~\citet{kallus2018policy}, a bandwidth hyperparameter is
selected by performing an auxiliary nonparametric estimation task,
while in~\citet{liu2018breaking}, it is selected as the median of the
distances between points. In both cases, no theoretical guarantees are
provided for such methods.

Indeed, despite the prevalence of hyperparameters in these methods, we
are only aware of two methods for estimator selection: the \magic
estimator~\citep{thomas2016data}, and the bound minimization approach
studied by~\citet{su2019doubly} (see also~\citet{wang2017optimal}).
Both approaches use MSE surrogates for estimator selection, where
\magic under-estimates the MSE and the latter uses an over-estimate.
The guarantees for both methods (asymptotic consistency, competitive
with unbiased approaches) are much weaker than our oracle inequality,
and \lepski substantially outperforms \magic in experiments.

Our approach is based on Lepski's principle for bandwidth selection in
nonparametric
statistics~\citep{lepskii1992asymptotically,lepskii1991problem,lepskii1993asymptotically,lepski1997optimal}.
In this seminal work, Lepski studied nonparametric estimation problems
and developed a data-dependent bandwidth selection procedure that
achieves optimal adaptive guarantees, in settings where procedures
like cross validation do not apply (e.g., estimating a regression
function at a single given point). Since its introduction, Lepski's
methodology has been applied to other statistics
problems~\cite{birge2001alternative,mathe2006lepskii,goldenshluger2011bandwidth,kpotufe2013adaptivity,page2018goldenshluger},
but its use in machine learning has been limited. To our knowledge,
Lepski's principle has not been used for off-policy evaluation, which
is our focus.

\section{Setup}
\label{sec:setup}

We formulate the estimator selection problem generically, where there is an
abstract space $\Zcal$ and a distribution $\Dcal$ over $\Zcal$. We
would like to estimate some parameter $\theta^\star \defeq
\theta(\Dcal) \in \RR$, where $\theta$ is some known real-valued
functional, given access to $z_1,\ldots,z_n \iidsim \Dcal$. Let
$\hat{\Dcal}$ denote the empirical measure, that is the uniform
measure over the points $z_{1:n}$. 

To estimate $\theta^\star$ we use a finite set of $M$ estimators
$\{\theta_i\}_{i=1}^M$, where each $\theta_i: \Delta(\Zcal) \to
\RR$. Given the dataset, we form the estimates $\hat{\theta}_i \defeq
\theta_i(\hat{\Dcal})$. Ideally, we would choose the index that
minimizes the absolute error with $\theta^\star$, that is $\argmin_{i
  \in [M]} \abr{\hat{\theta}_i - \theta^\star}$. Of course this
\emph{oracle} index depends on the unknown parameter $\theta^\star$,
so it cannot be computed from the data.  Instead we seek a data-driven
approach for selecting an index $\hat{i}$ that approximately minimizes
the error.

To fix ideas, in the RL context, we may think of $\theta$ as the value
of a \emph{target policy} $\pitarget$ and $z_{1:n}$ as $n$
trajectories collected by some \emph{logging policy} $\pilog$. The
estimators $\{\theta_i\}_{i=1}^M$ may be partial importance weighting
estimators~\cite{thomas2016data}, that account for policy mismatch on
trajectory prefixes of different length. 
These estimators modulate a bias variance tradeoff: importance
weighting short prefixes will have high bias but low variance, while
importance weighting the entire trajectory will be unbiased but have
high variance. We will develop this example in detail
in~\pref{sec:rl}.

For performance guarantees, we decompose the absolute error into two
terms: the bias and the deviation. For this decomposition, define
$\bar{\theta}_i \defeq \EE[\theta_i(\hat{D})]$ where the expectation
is over the random samples $z_{1:n}$. Then we have
\begin{align*}
\abr{\hat{\theta}_i - \theta^\star} \leq \abr{\bar{\theta}_i - \theta^\star} + \abr{\hat{\theta}_i - \bar{\theta}_i} \eqdef \biast(i) + \dev(i).
\end{align*}
As $\dev(i)$ involves statistical fluctuations only, it is amenable to
concentration arguments, so we will assume access to a high
probability upper bound. Namely, our procedure uses a confidence
function $\conf$ that satisfies $\dev(i) \leq \conf(i)$ for all $i \in
[M]$ with high probability. On the other hand, estimating the bias is
much more challenging, so we do not assume that the estimator has
access to $\biast(i)$ or any sharp upper bound. Our goal is to select
an index $\hat{i}$ achieving an \emph{oracle inequality} of the form
\begin{align}
\label{eq:guarantee_form}
\abr{\hat{\theta}_{\hat{i}} - \theta^\star}
\leq \const \times \min_{i}\cbr{\bias(i) + \conf(i)},
\end{align}
that holds with high probability where $\const$ is a universal
constant and $\bias(i)$ is a sharp upper bound on
$\biast(i)$.\footnote{Some assumptions prevent us from setting
  $\bias(i) = \biast(i)$.}  
This guarantee certifies that the
selected estimator is competitive with the error bound for the best
estimator under consideration.

We remark that the above guarantee is qualitatively similar, but
weaker than the ideal guarantee of competing with the actual error of
the best estimator (as opposed to the error bound). In theory, this
difference is negligible as the two guarantees typically yield the
same statistical conclusions in terms of convergence
rates. Empirically we will see that ~\pref{eq:guarantee_form} does
yield strong practical performance.

\section{General Development}
\label{sec:general}

To obtain an oracle inequality of the form
in~\pref{eq:guarantee_form}, we require some benign assumptions. When
we turn to the case studies, we will verify that these assumptions
hold for our estimators.

\paragraph{Validity and Monotonicity.}
The first basic property on the bias and confidence functions is that
they are valid in the sense that they actually upper bound the
corresponding terms in the error decomposition. 
\begin{assum}[Validity]
\label{assum:concentration}
We assume
\ifthenelse{\equal{\version}{arxiv}}{}{\vspace{-3pt}}
\begin{packed_enum}
\item (Bias Validity) $\abr{\bar{\theta}_i - \theta^\star} \leq
  \bias(i)$ for all $i$.
\item (Confidence Validity) With probability at least $1-\delta$,
  $|\hat{\theta}_i - \bar{\theta}_i| \leq \conf(i)$  for all $i$.
\end{packed_enum}
\ifthenelse{\equal{\version}{arxiv}}{}{\vspace{-3pt}}
\end{assum}
Typically $\conf$ can be constructed using straightforward
concentration arguments such as Bernstein's inequality. Importantly,
$\conf$ does not have to account for the bias, so the term $\dev$ that
we must control is centered. We also note that $\conf$ need not be
deterministic, for example it can be derived from empirical Bernstein
inequalities. We emphasize again that the estimator does not have
access to $\bias(i)$.

We also require a monotonicity property on these functions.
\begin{assum}[Monotonicity]
\label{assum:monotonicity}
We assume that there exists a constant $\monoconst > 0$ such that for all
$i \in [M-1]$
\ifthenelse{\equal{\version}{arxiv}}{}{\vspace{-3pt}}
\begin{packed_enum}
\item $\bias(i) \leq \bias(i+1)$.
\item $\monoconst\cdot\conf(i) \leq \conf(i+1) \leq \conf(i)$.
\end{packed_enum}
\ifthenelse{\equal{\version}{arxiv}}{}{\vspace{-3pt}}
\end{assum}

In words, the estimators should be ordered so that the bias is
monotonically increasing and the confidence is decreasing but not too
quickly, as parameterized by the constant $\monoconst$.
This structure is quite natural when estimators navigate a
bias-variance tradeoff: when an estimator has low bias it typically
also has high variance and vice versa.
It is also straightforward to enforce a decay rate for $\conf$ by
selecting the parameter set appropriately. We will see how to do this
in our case studies.

\paragraph{The \lepski procedure.}
\lepski is an acronym for ``Selection by Lepski's principle for
Off-Policy Evaluation.'' As the name suggests, the approach is based
on Lepski's principle~\cite{lepski1997optimal} and is defined as
follows. We first define intervals
\begin{align*}
I(i) \defeq [\hat{\theta}_i - 2\conf(i), \hat{\theta}_i + 2\conf(i)],
\end{align*}
and we then use the intersection of these intervals to select an index
$\hat{i}$. Specifically, the index we select is
\begin{align*}
\hat{i} \defeq \max\cbr{i \in [M]: \cap_{j=1}^i I(j) \ne \emptyset}.
\end{align*}
In words, we select the largest index such that the intersection of
all previous intervals is non-empty. See~\pref{fig:illustration}
for an illustration.

\ifthenelse{\equal{\version}{arxiv}}{
\begin{wrapfigure}{R}{0.5\textwidth}
\begin{center}

\begin{tikzpicture}[scale=1.5]
\draw[|-|] (0,0) -- (0,2); \draw[|-|] (0,0.99) -- (0,1.01); \node at (-0.2,1) {$\hat{\theta}_1$};
\draw[|-|] (1,0.3) -- (1,1.3); \draw[|-|] (1,0.79) -- (1,0.81); \node at (0.8,0.8) {$\hat{\theta}_2$};
\draw[|-|] (2,1.0) -- (2,1.5);  \draw[|-|] (2,1.24) -- (2,1.26); \node at (1.8,1.25) {$\hat{\theta}_3$};
\draw[|-|] (3,0.4) -- (3,0.65); \draw[|-|] (3,0.515) -- (3,0.535); \node at (2.8,0.525) {$\hat{\theta}_4$};
\draw[|-|] (4,0.1) -- (4,0.225);  \draw[|-|] (4,0.1525) -- (4,0.1725); \node at (3.8,0.1625) {$\hat{\theta}_5$};
\node at (2,0) {$\hat{i} = 3$};
\draw[decoration={brace,mirror},decorate] (0.1,1) -- node[right=3pt] {$2\conf(1)$} (0.1,2);
\draw[decoration={brace,mirror},decorate] (1.1,0.3) -- node[right=3pt] {$I(2)$} (1.1,1.3);
\draw[decoration={brace,mirror},decorate] (2.1,1) -- node[right=3pt] {$I(3)$} (2.1,1.5);
\path [fill=blue,opacity=0.2] (0,1) rectangle (4,1.25);
\node[blue] at (3.5,1.125) {$\cap_{i=1}^3 I(i)$};
\end{tikzpicture}
 \end{center}
\vspace{-0.5cm}
\caption{Illustration of \lepski with $M=5$. As $\cap_{i=1}^3 I(i)$ is
  non-empty but $I(4)$ does not intersect with $I(3)$, we select
  $\hat{i}=3$.}
\vspace{-0.5cm}
\label{fig:illustration}
\end{wrapfigure}
}{
\begin{figure}
\begin{center}

\begin{tikzpicture}[scale=1.5]
\draw[|-|] (0,0) -- (0,2); \draw[|-|] (0,0.99) -- (0,1.01); \node at (-0.2,1) {$\hat{\theta}_1$};
\draw[|-|] (1,0.3) -- (1,1.3); \draw[|-|] (1,0.79) -- (1,0.81); \node at (0.8,0.8) {$\hat{\theta}_2$};
\draw[|-|] (2,1.0) -- (2,1.5);  \draw[|-|] (2,1.24) -- (2,1.26); \node at (1.8,1.25) {$\hat{\theta}_3$};
\draw[|-|] (3,0.4) -- (3,0.65); \draw[|-|] (3,0.515) -- (3,0.535); \node at (2.8,0.525) {$\hat{\theta}_4$};
\draw[|-|] (4,0.1) -- (4,0.225);  \draw[|-|] (4,0.1525) -- (4,0.1725); \node at (3.8,0.1625) {$\hat{\theta}_5$};
\node at (2,0) {$\hat{i} = 3$};
\draw[decoration={brace,mirror},decorate] (0.1,1) -- node[right=3pt] {$2\conf(1)$} (0.1,2);
\draw[decoration={brace,mirror},decorate] (1.1,0.3) -- node[right=3pt] {$I(2)$} (1.1,1.3);
\draw[decoration={brace,mirror},decorate] (2.1,1) -- node[right=3pt] {$I(3)$} (2.1,1.5);
\path [fill=blue,opacity=0.2] (0,1) rectangle (4,1.25);
\node[blue] at (3.5,1.125) {$\cap_{i=1}^3 I(i)$};
\end{tikzpicture}
 \end{center}
\vspace{-0.4cm}
\caption{Illustration of \lepski with $M=5$. As $\cap_{i=1}^3 I(i)$ is
  non-empty but $I(4)$ does not intersect with $I(3)$, we select
  $\hat{i}=3$.}
\vspace{-0.4cm}
\label{fig:illustration}
\end{figure}
}

The intuition is to adopt an optimistic view of the bias function
$\bias(i)$. First observe that if $\bias(i) = 0$ then,
by~\pref{assum:concentration}, we must have $\theta^\star \in
I(i)$. 
Reasoning optimistically, it is
possible that we have $\bias(i) = 0$ for all $i \leq \hat{i}$, since
by the definition of $\hat{i}$ there exists a choice of $\theta^\star$
that is consistent with all intervals. As $\conf(\hat{i})$ is the
smallest among these, index $\hat{i}$ intuitively has lower error than
all previous indices. On the other hand, it is not possible to have
$\bias(\hat{i}+1) = 0$, since there is no consistent choice for
$\theta^\star$ and the bias is monotonically increasing. In fact, if
$\theta^\star \in I_{\hat{i}}$, then we must actually have
$\bias(\hat{i}+1) \geq \conf(\hat{i}+1)$, since the intervals have
width $4\conf(\cdot)$. Finally, since $\conf(\cdot)$ does not shrink
too quickly, all subsequent indices cannot be much better than
$\hat{i}$, the index we select. Of course, we may not have
$\theta^\star \in I_{\hat{i}}$, so this argument does not constitute a
proof of correctness, which is deferred to~\pref{app:proofs}.

\paragraph{Theoretical analysis.}
We now state the main guarantee.
\begin{theorem}
\label{thm:main}
Under~\pref{assum:concentration} and~\pref{assum:monotonicity}, we have that with probability at least $1-\delta$:
\begin{align*}
\abr{\hat{\theta}_{\hat{i}} - \theta^\star} &\leq 6(1+\monoconst^{-1})\min_i\cbr{\bias(i) + \conf(i)}.
\end{align*}
\end{theorem}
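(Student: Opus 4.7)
The plan is to condition on the $1-\delta$ event from the confidence validity part of~\pref{assum:concentration}, so that $\lvert\hat\theta_i - \bar\theta_i\rvert \leq \conf(i)$ holds simultaneously for every $i$, and then argue entirely deterministically. Let $i^\star \in \arg\min_i \{\bias(i) + \conf(i)\}$ denote the oracle index. The argument naturally splits on whether $\hat i \geq i^\star$ or $\hat i < i^\star$.

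In the easy case $\hat i \geq i^\star$, the definition of $\hat i$ guarantees $\bigcap_{j=1}^{\hat i} I(j) \neq \emptyset$, so in particular $I(\hat i) \cap I(i^\star) \neq \emptyset$. This immediately yields $\lvert\hat\theta_{\hat i} - \hat\theta_{i^\star}\rvert \leq 2\conf(\hat i) + 2\conf(i^\star) \leq 4\conf(i^\star)$, using that $\conf$ is non-increasing. A three-term triangle inequality through $\bar\theta_{i^\star}$, combined with validity, then produces $\lvert\hat\theta_{\hat i} - \theta^\star\rvert \leq 5\conf(i^\star) + \bias(i^\star)$, which is already of the desired form.

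In the hard case $\hat i < i^\star$, the definition of $\hat i$ says $\bigcap_{j=1}^{\hat i+1} I(j) = \emptyset$. Since these are intervals on the real line, Helly's theorem in one dimension implies that some pair of them is already disjoint, and any such pair must involve $I(\hat i+1)$ (otherwise the subfamily indexed by $[\hat i]$ would itself have empty intersection, contradicting the maximality of $\hat i$). Hence there exists $j \leq \hat i$ with $\lvert\hat\theta_j - \hat\theta_{\hat i+1}\rvert > 2(\conf(j) + \conf(\hat i+1))$. A triangle inequality through $\theta^\star$, combined with validity applied at both $j$ and $\hat i+1$, upgrades this to $\bias(j) + \bias(\hat i+1) > \conf(j) + \conf(\hat i+1)$; bias monotonicity with $j \leq \hat i + 1 \leq i^\star$ and confidence monotonicity $\conf(\hat i+1)\leq \conf(j)$ then force $\conf(\hat i+1) < \bias(i^\star)$. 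The decay constant in~\pref{assum:monotonicity} gives $\conf(\hat i) \leq \monoconst^{-1}\conf(\hat i+1) < \monoconst^{-1}\bias(i^\star)$, and a final triangle inequality $\lvert\hat\theta_{\hat i} - \theta^\star\rvert \leq \conf(\hat i) + \bias(\hat i) \leq (1+\monoconst^{-1})\bias(i^\star)$ closes the case, where we used $\bias(\hat i)\leq \bias(i^\star)$ from $\hat i < i^\star$.

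Combining the two cases yields $\lvert\hat\theta_{\hat i} - \theta^\star\rvert$ bounded by an absolute constant multiple of $(1+\monoconst^{-1})(\bias(i^\star) + \conf(i^\star))$, matching the stated $6(1+\monoconst^{-1})$ after loosening. The main obstacle is the hard case and, specifically, extracting the offending pair of intervals from the empty grand intersection; this is where the one-dimensional structure (Helly on intervals) is essential, and where we unavoidably pay the $\monoconst^{-1}$ factor to pass from index $\hat i + 1$ back to $\hat i$ through confidence monotonicity. Everything else reduces to careful but routine triangle-inequality bookkeeping.
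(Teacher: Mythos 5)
Your proof is correct, but it takes a genuinely different route from the paper's. The paper follows the classical Lepski argument: it introduces the balancing index $\tilde{i} \defeq \max\{i : \bias(i) \leq \conf(i)\}$, shows $\theta^\star \in I(i)$ for all $i \leq \tilde{i}$ so that $\hat{i} \geq \tilde{i}$ and $|\hat{\theta}_{\hat{i}} - \theta^\star| \leq 6\conf(\tilde{i})$, and only then compares $\tilde{i}$ to the oracle index $i^\star$ in a separate second step. You instead compare $\hat{i}$ to $i^\star$ directly, splitting on $\hat{i} \geq i^\star$ versus $\hat{i} < i^\star$; the key novelty is your handling of the hard case, where you use one-dimensional Helly to extract from $\cap_{j=1}^{\hat{i}+1} I(j) = \emptyset$ a specific $j \leq \hat{i}$ with $I(j) \cap I(\hat{i}+1) = \emptyset$, and then convert the resulting separation $|\hat{\theta}_j - \hat{\theta}_{\hat{i}+1}| > 2\conf(j) + 2\conf(\hat{i}+1)$ into the lower bound $\conf(\hat{i}+1) < \bias(i^\star)$ via validity and monotonicity. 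Every step checks out (in particular $\hat{i} < i^\star \leq M$ guarantees $\hat{i}+1$ is a legal index), and your final constant $\max\{5,\, 1+\monoconst^{-1}\}$ is in fact slightly sharper than the stated $6(1+\monoconst^{-1})$. What the paper's route buys is the interpretable intermediate conclusion $\hat{i} \geq \tilde{i}$ (the selected index is never more conservative than the bias--variance balancing point), which is the conceptual heart of Lepski's principle; what your route buys is a shorter, self-contained argument that sidesteps the edge case where $\{i : \bias(i) \leq \conf(i)\}$ is empty (a case the paper's proof does not explicitly address) and that localizes exactly where the $\monoconst^{-1}$ loss is paid, namely in passing from index $\hat{i}+1$ back to $\hat{i}$.
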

The theorem verifies that the index $\hat{i}$ satisfies an oracle
inequality as in~\pref{eq:guarantee_form}, with $\const =
6(1+\monoconst^{-1})$.  This is the best guarantee one could hope for, up
to the constant factor and the caveat that we are competing with the
error bound instead of the error, which we have already discussed.
For off-policy evaluation, we are not aware of any other approaches
that achieve any form of oracle inequality. The closest comparison is
the bound minimization approach of~\citet{su2019doubly}, which is
provably competitive only with unbiased indices (with $\bias(i) =
0$). However in finite sample, these indices could have high variance
and consequently worse performance than some biased estimator. In this
sense, the \lepski guarantee is much stronger.

While our main result gives a high probability absolute error bound,
it is common in the off-policy evaluation literature to instead
provide bounds on the mean squared error. Via a simple translation
from the high-probability guarantee, we can obtain a MSE bound here as
well. For this result, we use the notation $\conf(i;\delta)$ to
highlight the fact that the confidence bounds hold with probability
$1-\delta$.

\begin{corollary}[MSE bound]
\label{corr:mse}
In addition to~\pref{assum:concentration}
and~\pref{assum:monotonicity}, assume that $\theta^\star,
\hat{\theta}_i\in [0,R]$ a.s., $\forall i$, and that $\conf$ is
deterministic.\footnote{The restriction to deterministic confidence functions can easily be removed with another concentration argument.}  Then for any $\delta > 0$,
\begin{align*}
\EE (\hat{\theta}_{\hat{i}} - \theta^\star)^2 \leq \nicefrac{C}{\monoconst^2} \min_{i}\cbr{\bias(i)^2 + \conf(i;\delta)^2} +
R^2\delta,
\end{align*}
where $C > 0$ is a universal constant.\footnote{We have not attempted
  to optimize the constant, which can can be extracted from our proof
  in~\pref{app:proofs}.}
\end{corollary}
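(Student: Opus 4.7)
The plan is to convert the high-probability absolute-error bound of~\pref{thm:main} into a bound in expectation via a standard truncation argument, splitting the expectation according to whether the concentration event holds.

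First, let $\mathcal{E}$ denote the event on which the guarantee of~\pref{thm:main} holds; by~\pref{assum:concentration} with confidence parameter $\delta$, we have $\Pr[\mathcal{E}] \geq 1-\delta$. On $\mathcal{E}$,
\[
\abr{\hat{\theta}_{\hat{i}} - \theta^\star} \leq 6(1+\monoconst^{-1})\min_{i}\cbr{\bias(i) + \conf(i;\delta)},
\]
and on the complement $\mathcal{E}^c$ the boundedness assumption $\theta^\star, \hat{\theta}_i \in [0,R]$ gives the deterministic bound $\abr{\hat{\theta}_{\hat{i}} - \theta^\star} \leq R$.

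Next I would square, using $(a+b)^2 \leq 2(a^2+b^2)$ on the good-event term, and split the expectation:
\[
\EE(\hat{\theta}_{\hat{i}}-\theta^\star)^2 = \EE[(\hat{\theta}_{\hat{i}}-\theta^\star)^2 \mathbf{1}_{\mathcal{E}}] + \EE[(\hat{\theta}_{\hat{i}}-\theta^\star)^2 \mathbf{1}_{\mathcal{E}^c}].
\]
The second term is at most $R^2 \Pr[\mathcal{E}^c] \leq R^2 \delta$. For the first, since the bound on $\mathcal{E}$ is deterministic in $\delta$ (here I use that $\conf(\cdot;\delta)$ is deterministic, so the minimum over $i$ is a fixed number), I bound it by
\[
72(1+\monoconst^{-1})^2 \min_i\cbr{\bias(i)^2 + \conf(i;\delta)^2}.
\]
Absorbing $(1+\monoconst^{-1})^2 \leq 4\monoconst^{-2}$ (for $\monoconst \leq 1$; otherwise $(1+\monoconst^{-1})^2 \leq 4$ and $\monoconst^{-2}\geq \monoconst^{-2}$ still upper bounds this up to constants) into the universal constant $C$ yields the stated form $C \monoconst^{-2} \min_i\{\bias(i)^2 + \conf(i;\delta)^2\} + R^2 \delta$.

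There is no serious obstacle here; the only care-point is ensuring that $\min_i\{\bias(i) + \conf(i;\delta)\}$ is a deterministic quantity so that it can be pulled outside the expectation on $\mathcal{E}$, which is exactly why the corollary assumes $\conf$ is deterministic. (Without this, an additional union bound or a tail-integration argument would be needed, as noted in the footnote.)
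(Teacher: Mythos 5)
Your proposal is correct and follows essentially the same route as the paper's proof: condition on the event that all confidence bounds hold, bound the complement's contribution by $R^2\delta$ using boundedness, apply \pref{thm:main} on the good event, and square via $(a+b)^2 \leq 2a^2+2b^2$ to reach $72(1+\monoconst^{-1})^2\min_i\{\bias(i)^2+\conf(i;\delta)^2\}$, which is absorbed into $C/\monoconst^2$ since $\monoconst \leq 1$. Your explicit remark that the determinism of $\conf$ is what makes the minimum a fixed quantity is exactly the role that assumption plays in the paper's argument as well.
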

We state this bound for completeness but remark that it is typically
loose in constant factors because it is proven through a high
probability guarantee. In particular, we typically require $\conf(i) >
\sqrt{\var(i)}$ to satisfy~\pref{assum:concentration}, 
which is already loose in comparison with a more direct MSE bound.
Unfortunately, Lepski's principle cannot provide direct MSE bounds
without estimating the MSE itself, which is precisely the problem we
would like to avoid. On the other hand, the high probability guarantee
provided by~\pref{thm:main} is typically more practically meaningful.

\section{Application 1: continuous contextual bandits}
\label{sec:ccb}
For our first application, we consider a contextual bandit setting
with continuous action space, following~\citet{kallus2018policy}. Let
$\Xcal$ be a context space and $\Acal = [0,1]$ be a univariate
real-valued action space. There is a distribution $\Pcal$ over
context-reward pairs, which is supported on $(\Xcal, \Acal \to
[0,1])$. We have a stochastic logging policy $\pilog: \Xcal \to
\Delta(\Acal)$ which induces the distribution $\Dcal$ by generating
tuples $\{(x_i,a_i,r_i(a_i),\pilog(a_i))\}_{i=1}^n$, where $(x_i,r_i)
\sim \Pcal$, $a_i \sim \pilog(x_i)$, only $r_i(a_i)$ is observed, and
$\pilog(a_i)$ denotes the density value. This is a bandit setting as
the distribution $\Pcal$ specifies rewards for all actions, but only
the reward for the chosen action is available for estimation.

For off-policy evaluation, we would like to estimate the value of some
target policy $\pitarget$, which is given by $V(\pitarget) \defeq
\EE_{(x,r) \sim \Pcal,a\sim\pitarget(x)}\sbr{r(a)}$.
Of course, we do not have sample access to $\Pcal$ and must resort to
the logged tuples generated by $\pilog$. A standard off-policy
estimator in this setting is
\begin{align*}
\hat{V}_h(\pitarget) := \frac{1}{nh}\sum_{i=1}^n \frac{K( |\pitarget(x_i)- a_i|/h) r_i(a_i)}{\pilog(a_i)},
\end{align*}
where $K: \RR_+ \to \RR$ is a \emph{kernel function} (e.g., the boxcar
kernel $K(u) = \frac{1}{2} \one\{u \leq 1\}$).
This
estimator has appeared in recent
work~\citep{kallus2018policy,krishnamurthy2019contextual}.  The key
parameter is the bandwidth $h$, which modulates a bias-variance
tradeoff, where smaller bandwidths have lower bias but higher
variance.

\subsection{Theory}
For a simplified exposition, we instantiate our general framework 
when (1) $\pilog$ is the uniform logging
policy, (2) $K$ is the boxcar kernel, and (3) we assume that
$\pitarget(x) \in [\gamma_0,1-\gamma_0]$ for all $x$. These
simplifying assumptions help clarify the results, but they are not
fundamentally limiting.

Fix $\gamma \in (0,1)$ and let $\Hcal \defeq \{\gamma_0 \gamma^{M-i}:
1 \leq i \leq M\}$ denote a geometrically spaced grid of $M$ bandwidth
values. Let $\hat{\theta}_i \defeq \hat{V}_{h_i}(\pitarget)$. For the
confidence term,
in~\pref{app:proofs}, we show that we can set
\begin{align}
\conf(i) \defeq \sqrt{\frac{2 \log(2M/\delta)}{nh_i}} + \frac{2\log(2M/\delta)}{3nh_i} \label{eq:ccb_conf}
\end{align}
and this satisfies~\pref{assum:concentration}. With this form, it is not hard
to see that the second part of~\pref{assum:monotonicity} is also
satisfied, and so we obtain the following result.

\begin{theorem}
\label{thm:ccb}
Consider the setting above with uniform $\pilog$, boxcar kernel, and
$\Hcal$ as defined above. Let $\bias$ be any valid and monotone bias
function, and define \conf as
in~\pref{eq:ccb_conf}. Then~\pref{assum:concentration}
and~\pref{assum:monotonicity} are satisfied with $\monoconst=\gamma$,
so the guarantee in~\pref{thm:main} applies.

In particular, if $\gamma,\gamma_0$ are constants and
rewards are $L$-Lipschitz, for $\omega(\sqrt{\log(\log(n))/n}) \leq L
\leq O(n)$, then \begin{align*}
\abr{\hat{\theta}_{\hat{i}} - \theta^\star} \leq O\rbr{\rbr{\frac{L\log(\log(n)/\delta)}{n}}^{1/3}}
\end{align*}
with probability at least $1-\delta$, \emph{without knowledge of the
  Lipschitz constant $L$.}
\end{theorem}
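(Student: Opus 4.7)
\textbf{Proof plan for Theorem \ref{thm:ccb}.}
The plan is to split the result into two pieces: first, verify that Assumptions~\ref{assum:concentration} and~\ref{assum:monotonicity} hold for the stated $\conf$ and grid $\Hcal$, so that Theorem~\ref{thm:main} is applicable; second, under Lipschitzness, plug in an explicit bias proxy and optimize over the grid to extract the $n^{-1/3}$ rate.

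\textbf{Step 1 (confidence validity).} Each per-sample term $Z_i = K(|\pitarget(x_i)-a_i|/h)\,r_i(a_i)/h$ is non-negative, bounded by $1/(2h)$ (since the boxcar kernel is at most $1/2$ and $r\in[0,1]$), and has second moment at most $\EE[K^2(\cdot)/h^2]=\Pr[|\pitarget(x)-a|\le h]/(4h^2)=1/(2h)$, using that $\pilog$ is uniform and $\pitarget(x)\in[\gamma_0,1-\gamma_0]$ with $h\le\gamma_0$ (which we can ensure by the top of the grid). A standard Bernstein bound per estimator together with a union bound over the $M$ indices then yields~\eqref{eq:ccb_conf}, verifying the confidence half of Assumption~\ref{assum:concentration}. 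The bias half is taken as a hypothesis of the first part of the theorem.

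\textbf{Step 2 (monotonicity, $\monoconst=\gamma$).} Since $h_{i+1}=h_i/\gamma$ and $\gamma<1$, $\conf(i+1)\le\conf(i)$ is immediate because both the $(nh)^{-1/2}$ and $(nh)^{-1}$ terms decrease. For the lower bound, rewrite
\[
\conf(i+1)=\sqrt{\gamma}\cdot\sqrt{\tfrac{2\log(2M/\delta)}{nh_i}}+\gamma\cdot\tfrac{2\log(2M/\delta)}{3nh_i}\;\ge\;\gamma\cdot\conf(i),
\]
since $\sqrt{\gamma}\ge\gamma$. This gives the constant $\monoconst=\gamma$. Bias monotonicity holds by hypothesis. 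Invoking Theorem~\ref{thm:main} proves the first claim of the theorem.

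\textbf{Step 3 (instantiation under $L$-Lipschitz rewards).} For the rate, take $\bias(i)\defeq L h_i$. Validity follows because, for the boxcar/uniform setup with $h_i\le\gamma_0$,
\[
\bar\theta_i=\EE_x\,\tfrac{1}{2h_i}\!\int_{\pitarget(x)-h_i}^{\pitarget(x)+h_i}\! r(a)\,da,
\]
so $|\bar\theta_i-V(\pitarget)|\le Lh_i$ by Lipschitzness of $r$ and Jensen; monotonicity of $\bias(i)=Lh_i$ is obvious since $h_i$ is increasing. Theorem~\ref{thm:main} then gives $|\hat\theta_{\hat i}-\theta^\star|\le O(\gamma^{-1})\min_i\{Lh_i+\conf(i)\}$. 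Ignoring the lower-order Bernstein term, optimizing $Lh+\sqrt{\log(M/\delta)/(nh)}$ in $h$ yields $h^\star\asymp(\log(M/\delta)/(L^2 n))^{1/3}$ and value $(L\log(M/\delta)/n)^{1/3}$. Choosing $M=\Theta(\log n)$ makes $\log(M/\delta)=\log(\log(n)/\delta)$, and the geometric grid with constant ratio $\gamma$ guarantees some $h_i\in\Hcal$ within a factor $\gamma$ of $h^\star$, losing only a constant. The range hypothesis $\omega(\sqrt{\log\log n/n})\le L\le O(n)$ is exactly what is needed so that $h^\star\in[\gamma_0\gamma^{M-1},\gamma_0]$, i.e.\ lies inside the grid range; otherwise the boundary bandwidth already achieves the stated rate trivially.

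\textbf{Main obstacle.} The real work is the monotonicity constant: because $\conf$ is a sum of two terms with different decay rates in $h$, one has to be slightly careful to obtain $\monoconst=\gamma$ uniformly rather than a mixture of $\gamma$ and $\sqrt{\gamma}$. A secondary nuisance is confirming that the discrete grid $\Hcal$ contains a bandwidth close enough to the continuous oracle $h^\star$ across the full range of $L$ allowed by the theorem, which is where the two-sided assumption on $L$ and the choice $M=\Theta(\log n)$ enter.
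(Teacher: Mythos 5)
Your proposal is correct and follows essentially the same route as the paper's proof: Bernstein plus a union bound for confidence validity, the $\sqrt{\gamma}\ge\gamma$ observation for $\monoconst=\gamma$, the bias proxy $\bias(i)=Lh_i$, and a $\Theta(\log n)$-sized geometric grid bracketing $h^\star$, with the two-sided condition on $L$ serving exactly the role you describe. The only difference is cosmetic: where you ``ignore the lower-order Bernstein term,'' the paper justifies this by clipping estimates to $[0,1]$ so that $\conf(i)\le\tfrac{4}{3}\sqrt{\log(2M/\delta)/(nh_i)}$ whenever the bound is nontrivial.
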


For the second statement, we remark that if the Lipschitz constant
were known, the best error rate achievable is $O(
(L\log(1/\delta)/n)^{1/3})$. 
Thus, \lepski incurs almost no price for adaptation. We also note that
it is typically impossible to know this parameter in practice.

It is technical but not difficult to derive a more general result,
relaxing many of the simplifying assumptions we have made.
To this end, we provide a guarantee for non-uniform $\pilog$ in the
appendix. We do not pursue other extensions here, as the necessary
techniques are well-understood
(c.f.,~\citet{kallus2018policy,krishnamurthy2019contextual}).

\subsection{Experiments}
\label{sec:cb_exp}

We empirically evaluate using \lepski for bandwidth selection in a
synthetic environment for continuous action contextual bandits. We
summarize the experiments and findings here with detailed description
in~\pref{app:ccb_exp}.\footnote{Code for this section is publicly
  available at
  \url{https://github.com/VowpalWabbit/slope-experiments}.}

\ifthenelse{\equal{\version}{arxiv}}{
\begin{wraptable}{R}{0.46\textwidth}
\begin{center}
\vspace{-0.85cm}
\begin{tabular}{| c |}
\hline
Reward fn $\in \{ \textrm{quadratic, absolute value} \}$\\
\hline
Lipschitz const $\in \{0.1,0.3,1,3,10\}$\\
\hline
Kernel $\in \{\textrm{boxcar, Epanechnikov}\}$\\
\hline
$\pitarget \in \{\textrm{linear, tree}\}$\\
\hline
$\pilog \in \{\textrm{linear, tree}\}$\\
\hline
Randomization $\in \{\textrm{unif, friendly, adversarial}\}$\\
\hline
Samples $\in \{10^i : i \in \{1,2,3,4,5\}\}$\\
\hline
\end{tabular}
\vspace{-0.25cm}
\caption{Contextual Bandit experiment conditions.}
\vspace{-0.65cm}
\label{tab:cb_conds}
\end{center}
\end{wraptable}
}{
\begin{table}
\begin{center}
\begin{tabular}{| c |}
\hline
Reward fn $\in \{ \textrm{quadratic, absolute value} \}$\\
\hline
Lipschitz const $\in \{0.1,0.3,1,3,10\}$\\
\hline
Kernel $\in \{\textrm{boxcar, Epanechnikov}\}$\\
\hline
$\pitarget \in \{\textrm{linear, tree}\}$\\
\hline
$\pilog \in \{\textrm{linear, tree}\}$\\
\hline
Randomization $\in \{\textrm{uniform, friendly, adversarial}\}$\\
\hline
Samples $\in \{10^i : i \in \{1,2,3,4,5\}\}$\\
\hline
\end{tabular}
\caption{Contextual Bandit experimental conditions.}
\vspace{-0.4cm}
\label{tab:cb_conds}
\end{center}
\end{table}
}
\vspace{-0.2em}
\paragraph{The environment.}
We use a highly configurable synthetic environment,
which
allows for action spaces of arbitrary dimension, varying reward
function, reward smoothness, kernel, target, and logging
policies. In our experiments, we focus on $\Acal = [0,1]$. We vary all
other parameters, as summarized in~\pref{tab:cb_conds}.

The simulator prespecifies a mapping $x \mapsto a^\star(x)$ which is
the global maxima for the reward function. We train deterministic
policies by regressing from the context to this global maxima.
For the logging policy, we use two ``softening'' approaches
for randomization,
following~\citet{farajtabar2018more}. We use two regression models
(linear, decision tree), and two softenings in addition to uniform
logging, for a total of 5 logging and 2 target policies.

\vspace{-0.2em}
\paragraph{Methods.}
We consider 7 different choices of geometrically spaced bandwidths
$\Hcal \defeq \{2^{-i}:i \in [7]\}$. We evaluate the performance of
these fixed bandwidths in comparison with \lepski, which selects from
$\Hcal$. For \lepski, we simplify the implementation by replacing the
confidence function in~\pref{eq:ccb_conf}, with twice the empirical
standard deviation of the corresponding estimate. This approximation
is a valid asymptotic confidence interval and is typically sharper
than~\pref{eq:ccb_conf}, so we expect it to yield better practical
performance. We also manually enforce monotonicity of this confidence
function.

We are not aware of other viable baselines for this setting. In
particular, the heuristic method of~\citet{kallus2018policy} is too
computationally intensive to use at our scale.

\vspace{-0.2em}
\paragraph{Experiment setup.}
We have 1000 conditions determined by: logging
policy, target policy, reward functional form, reward smoothness,
kernel, and number of samples $n$. For each condition, we first obtain a
Monte Carlo estimate of the ground truth $V(\pitarget)$ by collecting
100k samples from $\pitarget$.  Then we collect $n$ trajectories from
$\pilog$ and evaluate the squared error of each method $(\hat{V} -
V(\pitarget))^2$. We perform 30 replicates of each condition with
different random seeds and calculate the correspond mean squared error
(MSE) for each method: $\mathrm{MSE} := \frac{1}{30}\sum_{i=1}^{30}
(\hat{V}_i - V(\pitarget))^2$ where $\hat{V}_i$ is the estimate on the
$i^{\textrm{th}}$ replicate.

\begin{figure*}
\includegraphics[width=\textwidth]{./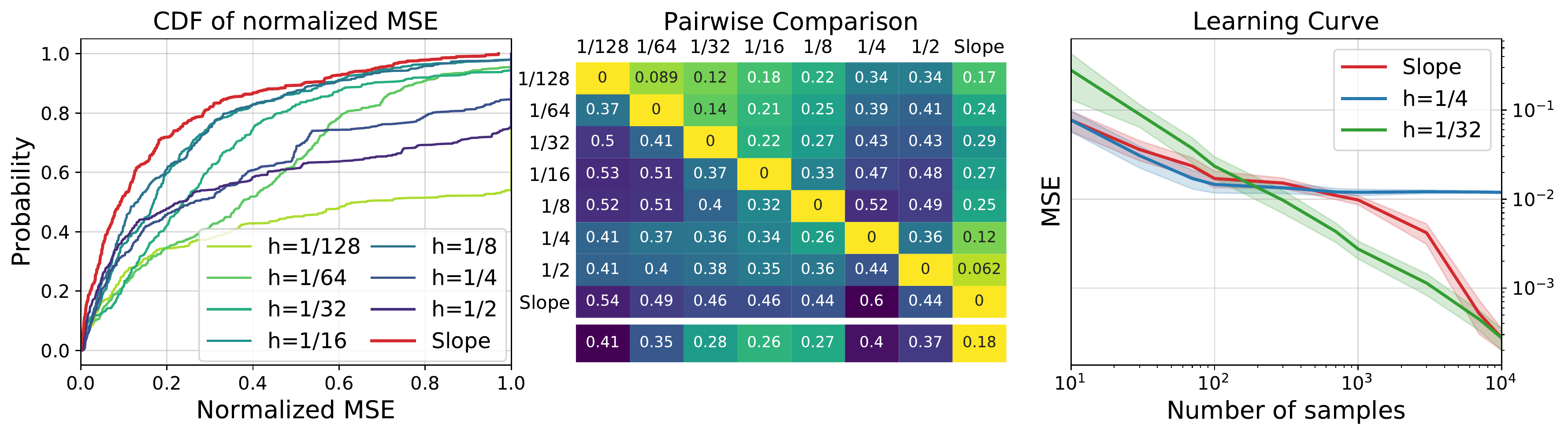}
\ifthenelse{\equal{\version}{arxiv}}{\vspace{-0.5cm}}{\vspace{-0.75cm}}
\caption{Experimental results for contextual bandits with continuous
  actions. Left: CDF of normalized MSE across all 480
  conditions. Normalization is by the worst MSE for that
  condition. Middle: Pairwise comparison matrix, entry $P_{i,j}$
  counts the fraction of conditions where method $i$ is statistically
  significantly better than method $j$, so larger numbers in the rows
  (or smaller numbers in the columns) is better. Right: asymptotic
  behavior of \lepski selecting between two bandwidths. }
\vspace{-0.25cm}
\label{fig:cb_exp}
\end{figure*}

\paragraph{Results.}
The left panel of~\pref{fig:cb_exp} aggregates results via the
empirical CDF of the normalized MSE, where we normalize by the worst
MSE in each condition. The point $(x,y)$ indicates that on
$y$-fraction of conditions the method has normalized MSE at most $x$,
so better methods lie in the top-left quadrant. We see that \lepski is
the top performer in comparison with the fixed bandwidths.

In the center panel, we record the results of pairwise comparisons
between all methods. Entry $(i,j)$ of this array is the fraction of
conditions where method $i$ is significantly better than method $j$
(using a paired $t$-test with significance level $0.05$). Better
methods have smaller numbers in their column, which means they are
typically not significantly worse than other methods. The final
row summarizes the results by averaging each column. In this
aggregation, \lepski also outperforms each individual fixed bandwidth,
demonstrating the advantage in data-dependent estimator selection.

Finally, in the right panel, we demonstrate the behavior of \lepski in
a single condition as $n$ increases. Here \lepski is only selecting
between two bandwidths $\Hcal\defeq\{1/4,1/32\}$. When $n$ is small,
the small bandwidth has high variance but as $n$ increases, the bias
of the larger bandwidth dominates. \lepski effectively
navigates this tradeoff, tracking $h=1/4$ when $n$ is small, and
switching to $h=1/32$ as $n$ increases.

\paragraph{Summary.}
\lepski is the top performer when compared with fixed bandwidths in
our experiments. This is intuitive as we do not expect a single
fixed bandwidth to perform well across all conditions. On the other
hand, we are not aware of other approaches for bandwidth selection in
this setting, and our experiments confirm that \lepski is a viable and
practically effective approach.

\section{Application 2: reinforcement learning}
\label{sec:rl}

Our second application is multi-step reinforcement learning (RL). We
consider episodic RL where the agent interacts with the environment in
episodes of length $H$. Let $\Xcal$ be a state space and $\Acal$ a
finite action space. In each episode, a trajectory $\tau \defeq
(x_1,a_1,r_1,x_2,a_2,r_2,\ldots,x_H,a_H,r_H)$ is generated where (1)
$x_1\in\Xcal$ is drawn from a starting distribution $P_0$, (2) rewards
$r_h \in \RR$ and next state $x_{h+1}\in\Xcal$ are drawn from a system
descriptor $(r_h,x_{h+1})\sim P_+(x_h,a_h)$ for each $h$ (with the
obvious definition for time $H$), and (3) actions $a_1,\ldots,a_H \in
\Acal$ are chosen by the agent. A policy $\pi : \Xcal \mapsto
\Delta(\Acal)$ chooses a (possibly stochastic) action in each state
and has value $V(\pi) \defeq \EE\sbr{\sum_{h=1}^H \gamma^{h-1} r_h
  \mid a_{1:H}\sim\pi}$, where $\gamma \in (0,1)$ is a discount
factor.
For normalization, we assume that rewards are in $[0,1]$ almost
surely.

\ifthenelse{\equal{\version}{arxiv}}{
\begin{wraptable}{R}{0.55\textwidth}
  \begin{center}
    \vspace{-0.4cm}
    \begin{tabular}{|c|c|c|c|c|} 
      \hline
      \text{Environment} & \text{GW} & \text{MC}  & \text{Graph} & \text{PO-Graph}\\
      \hline
      \text{Horizon} & 25 & 250 & 16 & 16\\
      \text{MDP} & \text{Yes} & \text{Yes} & \text{Yes} & \text{No} \\
      \text{Sto Env} & \text{Both} & \text{No} & \text{Yes} & \text{Yes}\\
      \text{Sto Rew} & \text{No} & \text{No} & \text{Both} &\text{Both}\\
      \text{Sparse Rew} & \text{No} & \text{No} & \text{Both} &\text{Both} \\
      \text{Model class} & \text{Tabular} & \text{NN} &\text{Tabular} &\text{Tabular}\\
      \text{Samples} & $2^{7:9}$ & $2^{8:10}$ & $2^{7:10}$ & $2^{7:10}$\\
      \text{\# of policies} & 5 & 4 & 2 & 2\\
            \hline
    \end{tabular}
    \vspace{-0.25cm}
    \caption{RL Environment Details}
    \vspace{-0.6cm}
    \label{tab:envs}
  \end{center}
\end{wraptable}
}{
\begin{table}[t]
  \begin{center}
    \begin{tabular}{|c|c|c|c|c|} 
      \hline
      \text{Environment} & \text{GW} & \text{MC}  & \text{Graph} & \text{PO-Graph}\\
      \hline
      \text{Horizon} & 25 & 250 & 16 & 16\\
      \text{MDP} & \text{Yes} & \text{Yes} & \text{Yes} & \text{No} \\
      \text{Sto Env} & \text{Both} & \text{No} & \text{Yes} & \text{Yes}\\
      \text{Sto Rew} & \text{No} & \text{No} & \text{Both} &\text{Both}\\
      \text{Sparse Rew} & \text{No} & \text{No} & \text{Both} &\text{Both} \\
      \text{Model class} & \text{Tabular} & \text{NN} &\text{Tabular} &\text{Tabular}\\
      \text{Samples} & $2^{7:9}$ & $2^{8:10}$ & $2^{7:10}$ & $2^{7:10}$\\
      \text{\# of policies} & 5 & 4 & 2 & 2\\
            \hline
    \end{tabular}
    \vspace{-0.4cm}
    \caption{RL Environment Details}
    \vspace{-0.5cm}
    \label{tab:envs}
  \end{center}
\end{table}
}

For off-policy evaluation, we have a dataset of $n$
trajectories
$\{(x_{i,1},a_{i,1},r_{i,1},\ldots,x_{i,H},a_{i,H},r_{i,H})\}_{i=1}^n$
generated by following some logging policy $\pilog$, and we would like
to estimate $V(\pitarget)$ for some other target policy. 
The importance weighting approach is also standard here,
and perhaps the simplest estimator is
\begin{align}
\hat{V}_{\mathrm{IPS}}(\pitarget) \defeq \frac{1}{n}\sum_{i=1}^n \sum_{h=1}^H \gamma^{h-1} \rho_{i,h} r_{i,h},\label{eq:rl_ips}
\end{align}
where $\rho_{i,h} \defeq \prod_{h'=1}^h\tfrac{\pitarget(a_{i,h'} \mid
  x_{i,h'})}{\pilog(a_{i,h'} \mid x_{i,h'})}$ is the step-wise
importance weight.  This estimator is provably unbiased under very
general conditions, but it suffers from high variance due to the
$H$-step product of density ratios.\footnote{We note that there are
  variants with improved variance. As our estimator selection question
  is somewhat orthogonal, we focus on the simplest estimator.} An
alternative approach is to directly model the value function using
supervised learning, as in a regression based dynamic programming
algorithm like Fitted Q
Evaluation~\citep{riedmiller2005neural,szepesvari2005finite}.
While these ``direct modeling'' approaches have very low variance,
they are typically highly biased because they rely on supervised
learning models that cannot capture the complexity of the environment.
Thus they lie on the other extreme of the bias-variance spectrum.

To navigate this tradeoff,~\citet{thomas2016data} propose a family of
\emph{partial importance weighting estimators}. To instantiate this
family, we first train a direct model $\hat{V}_{\mathrm{DM}} : \Xcal
\times [H] \to \RR$ to approximate $(x,h) \mapsto
\EE_{\pi}\sbr{\sum_{h'=h}^H\gamma^{h'-h}r_{h'} \mid x_h=x}$, for
example via Fitted Q Evaluation.  Then, the estimator is
\ifthenelse{\equal{\version}{arxiv}}{
\begin{align}
\label{eq:partial_ips}
\hat{V}_{\eta}(\pitarget) & \defeq \frac{1}{n}\sum_{i=1}^n
\sum_{h=1}^\eta \gamma^{h-1} \rho_{i,h} r_{i,h} +
\frac{1}{n}\sum_{i=1}^n 
\gamma^\eta\rho_{i,\eta}\vdm(x_{i,\eta+1},\eta+1),
\end{align}
}{
\begin{align}
\label{eq:partial_ips}
\hat{V}_{\eta}(\pitarget) & \defeq \frac{1}{n}\sum_{i=1}^n
\sum_{h=1}^\eta \gamma^{h-1} \rho_{i,h} r_{i,h} \\
& ~~~~ +
\frac{1}{n}\sum_{i=1}^n 
\gamma^\eta\rho_{i,\eta}\vdm(x_{i,\eta+1},\eta+1),\notag
\end{align}
}
The estimator has a parameter $\eta$ that governs a \emph{false
  horizon} for the importance weighting component. Specifically, we
only importance weight the rewards up until time step $\eta$ and we
complete the trajectory with the predictions from a direct modeling
approach. 
The model selection question here centers around choosing the
false horizon $\eta$ at which point we truncate the unbiased
importance weighted estimator. 

\subsection{Theory}
We instantiate our general estimator selection framework in this
setting.  Let $\hat{\theta}_i \defeq \hat{V}_{H-i+1}(\pitarget)$ for
$i \in \{1,\ldots,H+1\}$. Intuitively, we expect that the variance of
$\hat{\theta}_i$ is large for small $i$, since the estimator involves
a product of many density ratios. Indeed, in the appendix, we derive a
confidence bound and prove that it verifies our assumptions. The bound
is quite complicated so we do not display it here, but we refer the
interested reader to~\pref{eq:rl_dev} in~\pref{app:proofs}. The bound
is a Bernstein-type bound which incorporates both variance and range
information. We bound these as
\ifthenelse{\equal{\version}{arxiv}}{
\begin{align*}
\textrm{Variance}(\hat{V}_\eta(\pitarget)) \leq 3\vmax^2(1 + \sum_{h=1}^\eta \gamma^{2(h-1)}\pmax^h), \qquad 
\textrm{Range}(\hat{V}_\eta(\pitarget)) \leq 3\vmax (1+\sum_{h=1}^\eta \gamma^{h-1}\pmax^h),
\end{align*}
}{
\begin{align*}
\textrm{Variance}(\hat{V}_\eta(\pitarget)) \leq 3\vmax^2(1 + \sum_{h=1}^\eta \gamma^{2(h-1)}\pmax^h)\\
\textrm{Range}(\hat{V}_\eta(\pitarget)) \leq 3\vmax (1+\sum_{h=1}^\eta \gamma^{h-1}\pmax^h),
\end{align*}
}
where $\vmax\defeq (1-\gamma)^{-1}$ is the range of the value function
and $\pmax \defeq \sup_{x,a}\tfrac{\pitarget(a \mid x)}{\pilog(a \mid
  x)}$ is the maximum importance weight, which should be
finite. Equipped with these bounds, we can apply Bernstein's
inequality to obtain a valid confidence interval.\footnote{This yields
  a relatively concise deviation bound, but we note that it is not the
  sharpest possible.} Moreover, it is not hard to show that this
confidence interval is monotonic with $\monoconst \defeq
(1 + \gamma \pmax)^{-1}$. This yields the following theorem.
\begin{theorem}[Informal]
\label{thm:rl}
Consider the episodic RL setting with $\hat{\theta}_i \defeq
\hat{V}_{H-i+1}(\pitarget)$ defined in~\pref{eq:partial_ips}. Let
$\bias$ be any valid and monotone bias function. Then with $\conf(i)$
as in~\pref{eq:rl_dev} in the appendix,~\pref{assum:concentration} and 
~\pref{assum:monotonicity} with
$\monoconst \defeq(1+\gamma\pmax)^{-1}$ hold, so~\pref{thm:main} applies.
\end{theorem}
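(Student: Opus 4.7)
The plan is to verify Assumptions \ref{assum:concentration} and \ref{assum:monotonicity} for the specified $\conf$, after which \pref{thm:main} applies directly. Since we are free to take any valid and monotone bias function, the bias conditions (bias validity and item 1 of monotonicity) hold by hypothesis. Thus all the work lies on the confidence side.

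The first step is to establish the stated variance and range bounds for a single trajectory's contribution $Y_i^{(\eta)} \defeq \sum_{h=1}^\eta \gamma^{h-1}\rho_{i,h} r_{i,h} + \gamma^\eta \rho_{i,\eta}\vdm(x_{i,\eta+1},\eta+1)$. Using $r_{i,h} \in [0,1]$ and $|\vdm|\leq \vmax$, the range bound follows by the triangle inequality and the bound $\rho_{i,h} \leq \pmax^h$. For the variance, we use $\text{Var}(Y_i^{(\eta)}) \leq \EE[(Y_i^{(\eta)})^2]$, expand the square, and bound the cross terms via Cauchy-Schwarz and the change-of-measure identity $\EE_{\pilog}[\rho_{i,h}^2] \leq \pmax^h$ (proved by an induction that peels off one density ratio at a time, converting $\pilog$ expectations into $\pitarget$ expectations). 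These two bounds plug into Bernstein's inequality applied to $\hat V_\eta = \tfrac{1}{n}\sum_i Y_i^{(\eta)}$, and a union bound over the $H+1$ choices of $\eta$ yields the $\conf$ function stated in~\pref{eq:rl_dev} and the confidence validity part of~\pref{assum:concentration}.

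The second step is monotonicity of $\conf$. Writing $V(\eta) \defeq 3\vmax^2(1+\sum_{h=1}^\eta\gamma^{2(h-1)}\pmax^h)$ and $R(\eta) \defeq 3\vmax(1+\sum_{h=1}^\eta\gamma^{h-1}\pmax^h)$, the Bernstein bound has the form $\conf(i) = c_1\sqrt{V(\eta)/n} + c_2 R(\eta)/n$ with $\eta = H-i+1$. Both $V$ and $R$ are nondecreasing in $\eta$, so increasing $i$ (which decreases $\eta$) decreases $\conf$, giving the upper half $\conf(i+1)\leq\conf(i)$. For the lower half, I compute $V(\eta)/V(\eta-1) = 1 + \gamma^{2(\eta-1)}\pmax^\eta/V(\eta-1)\cdot(3\vmax^2)^{-1}$ and the analogous ratio for $R$, then show each is at most $(1+\gamma\pmax)^2$ and $(1+\gamma\pmax)$ respectively, by using $V(\eta-1)\geq 3\vmax^2$ and $R(\eta-1)\geq 3\vmax$ together with $\pmax\geq 1$ to absorb the new term. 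This gives $\conf(i)/\conf(i+1)\leq 1+\gamma\pmax$ for both summands separately, hence for their sum, so $\monoconst = (1+\gamma\pmax)^{-1}$ works.

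The main obstacle is the monotonicity lower bound: the Bernstein bound mixes a square-root term with a linear term, and we need a uniform ratio $1+\gamma\pmax$ that dominates both across all $\eta \in \{0,\ldots,H\}$. Handling the boundary case $\eta = 1$ (where the denominator is just the ``$1$'' in $V(0)$ and $R(0)$) is where the constant has to be chosen carefully, and it is precisely this worst case that forces $\monoconst$ to degrade as $\pmax$ grows. Once this ratio inequality is verified term-by-term, both assumptions hold and~\pref{thm:main} applies with constant $6(1+\monoconst^{-1}) = 6(2+\gamma\pmax)$.
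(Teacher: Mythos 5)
Your overall architecture matches the paper's: treat the bias side as holding by hypothesis, establish variance and range bounds for a single trajectory, apply Bernstein's inequality plus a union bound over the $H+1$ false horizons to get confidence validity, and then prove two-sided monotonicity of $\conf$ by bounding the ratio of successive variance and range expressions. Your monotonicity step is essentially the paper's: it isolates the elementary fact $\frac{1+\sum_{\tau\le t}z^\tau}{1+\sum_{\tau\le t-1}z^\tau}\le 1+z$, applies it with $z=\gamma^2\pmax$ for the variance and $z=\gamma\pmax$ for the range, and uses $\sqrt{1+\gamma^2\pmax}\le 1+\gamma\pmax$ (via $\pmax\ge 1$) to obtain the common decay constant $(1+\gamma\pmax)^{-1}$; your term-by-term ratio bound is the same computation.

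The genuine gap is in the variance bound. You propose to expand $\EE[(Y^{(\eta)})^2]$ and control the cross terms by Cauchy--Schwarz, i.e. $\EE[\rho_h\rho_{h'}]\le\sqrt{\EE[\rho_h^2]\,\EE[\rho_{h'}^2]}\le\pmax^{(h+h')/2}$. Summing this over all pairs yields $\bigl(\sum_{h\le\eta}\gamma^{h-1}\pmax^{h/2}\bigr)^2$, which is \emph{not} bounded by a constant times $\sum_{h\le\eta}\gamma^{2(h-1)}\pmax^h$: when $\gamma^2\pmax$ is near $1$ the former exceeds the latter by a factor of order $\eta$, and the mismatch in the $\gamma$ exponent can make the gap exponentially large. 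So this route does not produce the $\conf$ of \pref{eq:rl_dev}, and the decay-rate analysis you then carry out is attached to the wrong function. The paper avoids cross terms entirely by adapting the law-of-total-variance recursion of Jiang and Li, which writes $\Var(\hat{V}_\eta^H)$ as a sum of per-step conditional variances and bounds each term by boundedness of rewards, values, and $\EE[\gamma^{2(h-1)}\rho_h^2]\le\gamma^{2(h-1)}\pmax^h$. If you prefer to keep the direct-expansion route, the repair is to evaluate the cross terms exactly with the tower property you already invoke for $\EE[\rho_h^2]$: since each one-step weight has conditional mean one, $\EE[\rho_h\rho_{h'}]=\EE[\rho_{\min(h,h')}^2]\le\pmax^{\min(h,h')}$, and summing the off-diagonal terms against $\sum_{h'>h}\gamma^{h'-1}\le\gamma^h\vmax$ recovers a bound of the stated form $3\vmax^2\bigl(1+\sum_{h\le\eta}\gamma^{2(h-1)}\pmax^h\bigr)$. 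With that repair the rest of your argument goes through and \pref{thm:main} applies with the claimed constant.
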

A more precise statement is provided in~\pref{app:proofs}, and we
highlight some salient details here. First, our analysis actually
applies to a doubly-robust variant of the estimator $\hat{V}_\eta$, in
the spirit of~\citep{jiang2015doubly}.  Second, $\bias(i) \defeq
\tfrac{\gamma^{H-i+1}-\gamma^H}{1-\gamma}$ is valid and monotone, and
can be used to obtain a concrete error bound.
However, 
the oracle inequality yields a stronger conclusion, since it applies
for \emph{any} valid and monotone bias function. This universality is
particularly important when using the doubly robust variant, since it
is typically not possible to sharply bound the bias.

The closest comparison is \magic~\citep{thomas2016data}, which is
strongly consistent in our setting. However, it does not satisfy any
oracle inequality and is dominated by \lepski in experiments.

\subsection{Experiments}
We evaluate \lepski in RL environments spanning 106 different
experimental conditions.
We also compare with previously proposed estimators and assess robustness under various conditions.
Our experiments closely follow the setup
of~\citet{voloshin2019empirical}. Here we provide an overview of the
experimental setup and highlight the salient differences from
theirs. All experimental details are
in~\pref{app:rl_exp}.\footnote{Code for this section is available at \url{https://github.com/clvoloshin/OPE-tools}.}

\paragraph{The environments.}
We use four RL environments: Mountain Car, Gridworld, Graph, and
Graph-POMDP (abbreviated MC, GW, Graph, and PO-Graph). All four
environments are from~\citet{voloshin2019empirical}, and they provide
a broad array of environmental conditions, varying in terms of horizon
length, partial observability, stochasticity in dynamics,
stochasticity in reward, reward sparsity, whether function
approximation is required, and overlap between logging and target
policies. Logging and target policies are
from~\citet{voloshin2019empirical}. A summary of the environments and
their salient characteristics is displayed in~\pref{tab:envs}.

\begin{figure*}[tb]
\includegraphics[width=\textwidth]{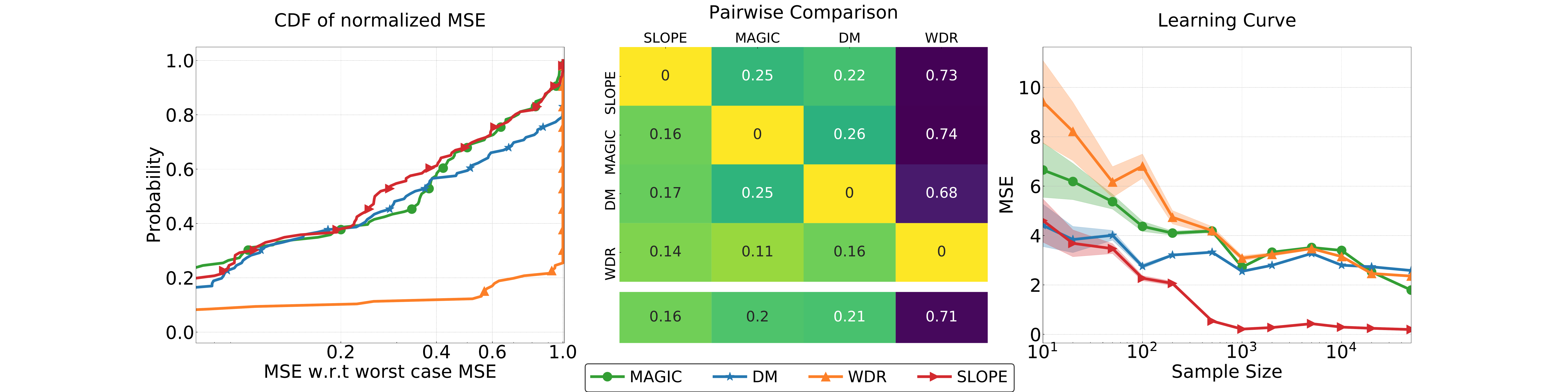}
\vspace{-0.5cm}
\caption{Left: Cumulative distribution function of the normalized MSE for all conditions, Middle: Pairwise comparison matrix $P$ for the methods, over all conditions. Element $P_{ij}$ denotes the percentage of times that method $i$ outperforms method $j$. The last row shows the column average for each method, the lower the better. Right: Learning Curve for the Hybrid domain.}
\label{fig:rl_fig}
\vspace{-0.25cm}
\end{figure*}

\paragraph{Methods.}
We compare four estimators: the direct model (DM), a self-normalized
doubly robust estimator (WDR), (c) \magic, and (d) \lepski. All four
methods use the same direct model, which we train either by Fitted Q
Evaluation or by $Q^\pi(\lambda)$~\citep{munos2016safe}, following the
guidelines in~\citet{voloshin2019empirical}. The doubly robust
estimator is the most competitive estimator in the family of
full-trajectory importance weighting. It is similar
to~\pref{eq:rl_ips}, except that the direct model is used as a control
variate and the normalizing constant $n$ is replaced with the sum of
importance weights. \magic, as we have alluded to, is the only other
estimator selection procedure we are aware of for this setting. It
aggregates partial importance weighting estimators to optimize a
surrogate for the MSE. For \lepski, we use twice the empirical
standard deviation as the confidence function, which is asymptotically
valid and easier to compute.

We do not consider other baselines for two reasons. First, DM, WDR,
and \magic span the broad estimator categories (importance weighted,
direct, hybrid) within which essentially all estimators
fall. Secondly, many other estimators have hyperparameters that must
be tuned, and we believe \lepski will also be beneficial when used in
these contexts.

\paragraph{Experiment Setup.}
We have 106 experimental conditions determined by
environment, stochasticity of dynamics and reward,
reward sparsity, logging policy, target policy, and number of
trajectories $n$.
For each condition, we calculate the MSE for each method by averaging over 100 replicates.

\paragraph{Results.}
In the left panel of~\pref{fig:rl_fig}, as in~\pref{sec:cb_exp}, we
first visualize the aggregate results via the cumulative distribution
function (CDF) of the normalized MSE in each condition (normalizing
by the worst performing method in each condition). As a reminder, the
figure reads as follows: for each $x$ value, the corresponding $y$
value is the fraction of conditions where the estimator has normalized
MSE at most $x$. In this aggregation, we see that WDR has the worst
performance, largely due to intolerably high variance. \magic and DM
are competitive with each other with \magic having a slight
edge. \lepski appears to have the best aggregate performance; for the
most part its CDF dominates the others.

In the central panel, we display an array of statistical comparisons
between pairs of methods. As before, entry $(i,j)$ of this array is
computed by counting the fraction of conditions where method $i$ beats
$j$ in a statistically significant manner (we use paired $t$-test on
the MSE with significance level $0.05$). The column-wise averages are
also displayed.

In this aggregation, we see clearly that \lepski dominates the three
other methods. First, \lepski has column average that is smaller than
the other methods. More importantly, \lepski is favorable when
compared with each other method individually. For example, \lepski is
(statistically) significantly worse than \magic on 16\% of the
conditions, but it is significantly better on 25\%. Thus, this
visualization clearly demonstrates that \lepski is the best performing
method in aggregate across our experimental conditions.

Before turning to the final panel of~\pref{fig:rl_fig}, we
recall~\pref{fig:intro_exp}, where we display results for two specific
conditions. Here, we see that \lepski outperforms or is statistically
indistinguishable from the best baseline, regardless of whether direct
modeling is better than importance weighting! We are not aware of any
selection method that enjoys this property.

\paragraph{Learning curves.}
The final panel of~\pref{fig:rl_fig} visualizes the performance of the
four methods as the sample size increases. Here we consider the Hybrid
domain from~\citet{thomas2016data}, which is designed specifically to
study the performance of partial importance weighting estimators. The
domain has horizon 22, with partial observability in the first two
steps, but full observability afterwards. Thus a (tabular) direct
model is biased since it is not expressive enough for the first two
time steps, but $\hat{V}_2(\pitarget)$ is a great estimator since the
direct model is near-perfect afterwards.

The right panel of~\pref{fig:rl_fig} displays the MSE for each method
as we vary the number of trajectories, $n$ (we perform 128
replicates and plot bars at $\pm 2$ standard errors). We see that when $n$ is small, DM
dominates, but its performance does not improve as $n$ increases due
to bias. Both WDR and \magic catch up as $n$ increases, but \lepski is
consistently competitive or better across all values of $n$,
outperforming the baselines by a large margin. Indeed, this is because
\lepski almost always chooses the optimal false horizon index of
$\eta=2$ (e.g., 90\% of the replicates when $n=500$).

\paragraph{Summary.}
Our experiments show that \lepski is competitive, if not the best,
off-policy evaluation procedure among \lepski, \magic, DM, and WDR. We
emphasize that \lepski is not an estimator, but a selection procedure
that in principle can select hyperparameters for many estimator
families. Our experiments with the partial importance weighting family
are quite convincing,
and we believe this demonstrates the potential for \lepski when used
with other estimator families for off-policy evaluation in
RL.\footnote{In settings where straightforward empirical variance
  estimates are not available, the bootstrap may provide an
  alternative approach for constructing the $\conf$
  function. Experimenting with such estimators is a natural future direction.}

\section{Discussion}
\label{sec:discussion}
In summary, this paper presents a new approach for estimator selection
in off-policy evaluation, called \lepski. The approach applies quite
broadly; in particular, by appropriately spacing hyperparameters, many
common estimator families can be shown to satisfy the assumptions for
\lepski. To demonstrate this, we provide concrete instantiations in
two important applications.  Our theory yields, to our knowledge, the
first oracle-inequalities for off-policy evaluation in RL. Our
experiments demonstrate strong empirical performance, suggesting that
\lepski may be useful in many off-policy evaluation contexts.
 
\section*{Acknowledgements}
We thank Mathias Lecuyer for comments on a preliminary version of this
paper.

\clearpage

\appendix
\section{Proofs}
\label{app:proofs}

\subsection{Proofs for~\pref{sec:general}}
\begin{proof}[Proof of~\pref{thm:main}]
The proof is similar to that of Corollary 1 in~\citep{mathe2006lepskii}.
Define $\tilde{i} = \max\{i: \bias(i) \leq \conf(i)\}$. The proof is
composed of two steps: first we show that we are competitive with
$\tilde{i}$ and then we show that $\tilde{i}$ is competitive with the
best index.

\paragraph{Competing with $\tilde{i}$.}
Observe that since $\bias$ is monotonically increasing, and $\conf$ is
monotonically decreasing, for $i \leq \tilde{i}$ we have
\begin{align*}
\bias(i) \leq \bias(\tilde{i}) \leq \conf(\tilde{i}) \leq \conf(i).
\end{align*}
Therefore, for $i \leq \tilde{i}$
\begin{align*}
\abr{\hat{\theta}_i - \theta^\star} \leq \bias(i) + \conf(i) \leq 2\conf(i).
\end{align*}
This implies that $\theta^\star \in I_i$ for all $i \leq
\tilde{i}$. 

As a consequence, the definition of our chosen index $\hat{i}$ implies
that $\hat{i} \geq \tilde{i}$, which in turn implies that
$I_{\tilde{i}} \cap I_{\hat{i}} \ne \emptyset$. So, there exists $x
\in I_{\tilde{i}} \cap I_{\hat{i}}$ such that $|x -
\hat{\theta}_{\tilde{i}}| \leq 2 \conf(\tilde{i})$ and $|x -
\hat{\theta}_{\hat{i}}| \leq 2 \conf(\hat{i})$. As we know that
$\theta^\star \in I_{\tilde{i}}$, we get
\begin{align}
\label{eq:ihat_vs_itilde}
|\hat{\theta}_{\hat{i}} - \theta^\star| \leq |\hat{\theta}_{\hat{i}} - x| + |x - \hat{\theta}_{\tilde{i}}| + |\hat{\theta}_{\tilde{i}} - \theta^\star| \leq 2 \conf(\hat{i}) + 2 \conf(\tilde{i}) + 2 \conf(\tilde{i}) \leq 6 \conf(\tilde{i}).
\end{align}

\paragraph{Comparing $\tilde{i}$ to $i^\star$.}
Define $i^\star \defeq \argmin_i\{ \bias(i) + \conf(i)\}$ which is the
index we actually want to compete with in our guarantee. If we compare
$\tilde{i}$ with $i^\star$, then by the above argument we can
translate to $\hat{i}$. For this, we consider two cases:

If $i^\star \leq \tilde{i}$, then by definition of $\tilde{i}$, we have
\begin{align*}
\bias(\tilde{i}) + \conf(\tilde{i}) \leq 2 \conf(\tilde{i}) \leq 2 \conf(i^\star) \leq 2 (\conf(i^\star) + \bias(i^\star)), 
\end{align*}
so we are a factor of $2$ worse. 

On the other hand, if $i^\star > \tilde{i}$ then
by~\pref{assum:monotonicity} and the optimality condition for
$\tilde{i}$
\begin{align*}
\monoconst \conf(\tilde{i}) \leq \conf(\tilde{i}+1) \leq \bias(\tilde{i}+1) \leq \bias(i^\star).
\end{align*}
This implies
\begin{align*}
\bias(\tilde{i}) + \conf(\tilde{i}) \leq (1+\nicefrac{1}{\monoconst}) \bias(i^\star). 
\end{align*}
As $\monoconst \leq 1$, this bound dominates the previous case, and together, with~\pref{eq:ihat_vs_itilde} we have
\begin{align*}
\abr{\hat{\theta}_{\hat{i}}- \theta^\star} \leq 6\times\conf(\tilde{i}) \leq 6(1+\nicefrac{1}{\monoconst}) \min_{i \in [M]} \cbr{\bias(i) + \conf(i)}. \tag*\qedhere
\end{align*}
\end{proof}

\begin{proof}[Proof of~\pref{corr:mse}]
For the MSE calculation, we simply need to translate from the high
probability guarantee to the MSE, which is not difficult
under~\pref{assum:concentration}. 
In particular, fix $\delta$ and let $\Ecal$ be the event that all
confidence bounds are valid, which holds with probability $1-\delta$,
then we have
\begin{align*}
\EE (\hat{\theta}_{\hat{i}} - \theta^\star)^2 &= \EE (\hat{\theta}_{\hat{i}} - \theta^\star)^2 \one\cbr{\Ecal} + \EE (\hat{\theta}_{\hat{i}} - \theta^\star)^2 \one\cbr{\bar{\Ecal}}\\
& \leq \EE (\hat{\theta}_{\hat{i}} - \theta^\star)^2 \one\cbr{\Ecal} + R^2 \delta\\
& \leq \EE \one\cbr{\Ecal} \rbr{6 (1+\nicefrac{1}{\monoconst}) \EE \min_{i \in [M]} \cbr{\bias(i) + \conf(i;\delta)}}^2 + R^2\delta\\
& \leq 72 (1+\nicefrac{1}{\monoconst})^2 \min_{i \in [M]}\cbr{\bias(i)^2 + \conf(i;\delta)^2} + R^2\delta.
\end{align*}
Here in the first line we are introducing the event $\Ecal$ and its
complement. In the second, we use that $\hat{\theta}_i, \theta^\star
\in [0,R]$ almost surely and that $\PP[\bar{\Ecal}] \leq \delta$
according to~\pref{assum:concentration}. In the third line, we
apply~\pref{thm:main}, which holds under event $\Ecal$. The final step
uses the simplification that $(a+b)^2 \leq 2a^2 + 2b^2$.
\end{proof}

\subsection{Proofs for~\pref{sec:ccb}}
\begin{proof}[Proof of~\pref{thm:ccb}]
Let us first verify that the confidence function specified
in~\pref{eq:ccb_conf} satisfy~\pref{assum:concentration}. We will
apply Bernstein's inequality, which requires variance and range
bounds. For the variance, a single sample satisfies
\begin{align*}
\Var\rbr{\frac{K(|\pitarget(x) - a|/h) r(a)}{h \times \pilog(a\mid x)}} 
& \leq \frac{1}{h} \EE\sbr{\frac{K(|\pitarget(x) - a|/h)^2}{h\times\pilog^2(a\mid x)} \mid a \sim \pilog(\cdot \mid x)} \leq \frac{1}{2h},
\end{align*}
where we first use that the variance is upper bounded by the second
moment, and then we use that $\pilog$ is uniform and the boxcar kernel
is at most $1/2$. Finally, we use that by a change of variables
$K(\cdot/h)/h$ integrates to $1$. Note that we are using that
$\pitarget \in [\gamma_0,1-\gamma_0]$, as we are integrating over
the support of $\pilog$.

For the range, we have
\begin{align*}
\sup \frac{K(|\pitarget(x) - a|/h)r(a)}{h\times\pilog(a\mid x)} \leq \frac{1}{2h}.
\end{align*}
Therefore, Bernstein's inequality gives that with probability $1-\delta$ we have
\begin{align*}
\abr{\hat{V}_h(\pitarget) - \EE \hat{V}_h(\pitarget)} \leq \sqrt{\frac{\log(2/\delta)}{nh}} + \frac{\log(2/\delta)}{3nh},
\end{align*}
and the first claim follows by a union bound. 

Monotonicity is also easy to verify with this definition of
$\conf$. In particular, since $h_i = \gamma h_{i+1}$ and $\gamma < 1$,
we immediately have that
\begin{align*}
\gamma \conf(i) = \gamma\sqrt{\frac{\log(2M/\delta)}{nh_i}} + \gamma\frac{\log(2M/\delta)}{3nh_i} = \gamma \sqrt{\frac{\log(2M/\delta)}{n\gamma h_{i+1}}} + \frac{\log(2M/\delta)}{3nh_{i+1}} \leq \conf(i+1)
\end{align*}
Clearly $\conf(i+1)\leq\conf(i)$, and so~\pref{assum:monotonicity}
holds. This verifies that we may apply~\pref{thm:main}.

For the last claim, if the rewards are $L$-Lipschitz, then we claim we
can set $\bias(i) = Lh_{i}$. To see why, observe that
\begin{align*}
\abr{\EE V_{h}(\pitarget) - V(\pitarget)} &= \abr{\EE_{(x,r), a \sim \pilog(x)} \frac{K(|\pitarget(x) - a|/h) r(a) }{h} - r(\pitarget(x))}\\
& = \abr{\EE_{(x,r)} \int_{a'} \frac{\one\cbr{|\pitarget(x) - a| \leq h} r(a)}{2h} - r(\pitarget(x))}\\
& = \abr{\EE_{(x,r)} \int_{a'} \frac{\one\cbr{|\pitarget(x) - a| \leq h} (r(a)- r(\pitarget(x)))}{2h}}
\leq Lh.
\end{align*}
Clearly this bias bound is monotonic. To apply~\pref{thm:main}, it is
better to first simplify the confidence function. Observe that as
$\theta^\star,\bar{\theta}_i \in [0,1]$, it is always better to clip
the estimates $\hat{\theta}_i$ to lie in $[0,1]$. This has no bearing
on the bias and only improves the deviation term, and in particular
allows us to replace $\conf(i)$ with $\min\{\conf(i),1\}$. This leads
to a further simplification:
\begin{align*}
\sqrt{\frac{\log(2M/\delta)}{nh_i}} + \frac{\log(2M/\delta)}{3nh_i} \leq 1 \Rightarrow \sqrt{\frac{\log(2M/\delta)}{nh_i}} + \frac{\log(2M/\delta)}{3nh_i} \leq \frac{4}{3}\sqrt{\frac{\log(2M/\delta)}{nh_i}}
\end{align*}

Therefore we may replace $\conf$ with this latter function and
by~\pref{thm:main} we guarantee that with probability at least
$1-\delta$
\begin{align*}
\abr{\hat{\theta}_{\hat{i}} - \theta^\star} \leq 6(1+\gamma^{-1})\min_i\cbr{ L h_i + \frac{4}{3}\sqrt{\frac{\log(2M/\delta)}{nh_i}}}.
\end{align*}
The optimal choice for $h$ is $h^\star \defeq
\rbr{\frac{4}{3L}\sqrt{\frac{\log(2M/\delta)}{n}}}^{2/3}$, which will
in general not be in our set $\Hcal$. However, if we use this choice
for $h$, the error rate is $O( (L/n)^{1/3})$, and since we know that
$\theta^\star \in [0,1]$, if $L > n$ then this error
guarantee is trivial. In other words, the maximum value of $L$ that we
are interested in adapting to is $L_{\max} = n$. This will be useful
in setting the number of models to search over $M$.

To set $M$, we want to ensure that there exists some $h_i$ such that
$h_i \leq h^\star \leq h_{i+1}$. We first verify the first inequality, which requires that
\begin{align*}
\gamma_0 \gamma^M \leq h^\star \defeq \rbr{\frac{4}{3L}\sqrt{\frac{\log(2M/\delta)}{n}}}^{2/3}
\end{align*}
We will always take $M \geq 2$, which implies that $\log(2M/\delta) \geq 1$. Then, since we are only interested in $L \leq n$, a sufficient condition here is
\begin{align*}
\gamma_0 \gamma^M \leq \frac{4}{3}^{2/3} n^{-1} \Rightarrow M \geq \frac{C_{\gamma_0} \log(n)}{\log(1/\gamma_0)},
\end{align*}
where $C_{\gamma_0}$ is a constant that only depends on
$\gamma_0$. The upper bound $h^\star \leq h_{i+1}$ is satisfied as
soon as $n$ is large enough, provided that $L \geq
\omega(\sqrt{\log(\log(n))/n})$, which we are assuming. Thus we know
that there is $i^\star$ such that $h_{i^\star} \leq h^\star \leq
h_{i^\star}/\gamma$, and using this choice, we have
\begin{align*}
\abr{\hat{\theta}_{\hat{i}} - \theta^\star} &\leq 6 (1+\gamma^{-1}) \cbr{ L h_{i^\star} + \frac{4}{3} \sqrt{\frac{\log(2M/\delta)}{nh_{i^\star}}}}
 \leq 6(1+\gamma^{-1}) \cbr{ L h^\star + \frac{4}{3} \sqrt{\frac{\log(2M/\delta)}{\gamma nh^\star}}}\\
& \leq 6 (1+\gamma^{-1}) \cdot \frac{c_1}{\sqrt{\gamma}} (L\log(2M/\delta)/n)^{1/3} \leq C_{\gamma,\gamma_0} (L\log(\log(n)/\delta)/n)^{1/3},
\end{align*}
where $C_{\gamma,\gamma_0}$ is a constant that depends only on
$\gamma,\gamma_0$.
\end{proof}

Note that if $\pilog$ is non-uniform, but satisfies $\inf_{x,a}
\pilog(a \mid x) \geq \pmin$, then very similar arguments apply. In
particular, we have that both variance and range are bounded by
$\frac{1}{2h\pmin}$, and some Bernstein's inequality in this case
yields
\begin{align*}
\abr{\hat{V}_h(\pitarget) - \EE \hat{V}_h(\pitarget)} \leq \sqrt{\frac{\log(2/\delta)}{nh\pmin}} + \frac{\log(2/\delta)}{3nh\pmin},
\end{align*}
Monotonicity follows from the same calculation as before and the
clipping trick yields a more interpretable final bound, which holds
with probability at least $1-\delta$, of
\begin{align*}
\abr{\hat{\theta}_{\hat{i}} - \theta^\star} \leq 6(1+\gamma^{-1})\min_i\cbr{ L h_i + \frac{4}{3}\sqrt{\frac{\log(2M/\delta)}{nh_i\pmin}}}.
\end{align*}
The remaining calculation for $M$ is analogous, since this bound is
identical to the previous one with $n$ replaced by $n\pmin$. Thus, we
obtain a final bound of
$C_{\gamma,\gamma_0}(L\log(\log(n/\delta))/(n\pmin))^{1/3}$.

\subsection{Proofs for~\pref{sec:rl}}

We first develop and state the more precise version
of~\pref{thm:rl}. We introduce the doubly robust version of the
partial importance weighting estimator. As it is the empirical average
over $n$ trajectories, here we will focus on a single trajectory
$(x_1,a_1,r_1,\ldots,x_H,a_H,r_H)$ sampled by following the logging
policy $\pilog$. 

Define $\vdr^0 \defeq 0$ and
\begin{align*}
\vdr^{H+1-h} \defeq \hat{V}(x_h) + p_h(r_h + \gamma \vdr^{H-t} - \hat{Q}(x_h,a_h)), \qquad p_h \defeq \frac{\pitarget(a_h\mid x_h)}{\pilog(a_h\mid x_h)}.
\end{align*}
where $\hat{Q}$ is the direct model, trained via supervised learning,
and $\hat{V}(x) = \EE_{a\sim\pitarget(x)}\hat{Q}(x,a)$.  The full
horizon doubly-robust estimator is $\vdr\defeq \vdr^H$. To define the
$\eta$-step partial estimator, let $\vdm^\eta \defeq \rho_\eta
\hat{Q}(x_{\eta+1},\pitarget(x_{\eta+1}))$, which is an estimate of
$\EE_{\pitarget}\sbr{V(x_{\eta+1})}$. Set $\vdm^H \defeq 0$. Then for
a false horizon $\eta$, we define a similar recursion
\begin{align*}
\hat{V}^{H+1-h}_\eta \defeq 
\left\{
\begin{aligned}
\hat{V}(x_h) + p_h(r_h + \gamma V_\eta^{H-h} - \hat{Q}(x_h,a_h)) & \textrm{ if } 1 \leq h < \eta\\
\hat{V}(x_h) + p_h(r_h + \gamma \vdm^\eta - \hat{Q}(x_h,a_h)) & \textrm{ if } h = \eta.
\end{aligned}
\right.
\end{align*}
The doubly robust variant of the $\eta$-step partial importance
weighted estimator is $\hat{V}_\eta^H$. We also define $\hat{V}_0^H =
\vdm^0$ which estimates $\EE\sbr{V(x_1)}$. Observe that if in the
definition of $\vdr$, we take $\hat{V},\hat{Q} \equiv 0$ then we
obtain the estimator in~\pref{eq:partial_ips}.

Define $\Delta \defeq \log(2(H+1)/\delta)$, $\vmax\defeq
(1-\gamma)^{-1}$ and recall that $\pmax\defeq
\max_{x,a}\tfrac{\pitarget(a |x)}{\pilog(a|x)}$. Then define
\begin{align*}
\bias(i) &\defeq \frac{\gamma^{H-i+1}-1}{1-\gamma}\\
\conf(i) &\defeq \sqrt{\frac{6\vmax^2\rbr{1+\sum_{h=1}^{H-i+1}\gamma^{2(h-1)}\pmax^h}\Delta}{n}} + \frac{6\vmax\rbr{1+\sum_{h=1}^{H-i+1}\gamma^{h-1}\pmax^h}\Delta}{3n}
\end{align*}

With these definitions, we know state the theorem
\begin{theorem}[Formal version of~\pref{thm:rl}]
\label{thm:rl_formal}
In the episodic reinforcement learning setting with discount factor
$\gamma$, consider the doubly robust partial importance weighting
estimators $\hat{\theta}_i \defeq \hat{V}^H_{H-i+1}(\pitarget)$ for $i
\in \{1,\ldots,H+1\}$. Then $\bias$ and $\conf$ are valid and
monotone, with $\kappa \defeq (1+\gamma\pmax)^{-1}$. 
\end{theorem}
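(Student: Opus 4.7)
The plan is to verify~\pref{assum:concentration} and~\pref{assum:monotonicity} for the stated $\bias$ and $\conf$, after which~\pref{thm:main} applies directly. The starting point is to unroll the recursion defining $\hat V_\eta^H$ into the single-trajectory identity
\[
\hat V_\eta^H = \hat V(x_1) + \sum_{h=1}^\eta \gamma^{h-1}\rho_h\bigl(r_h - \hat Q(x_h,a_h)\bigr) + \sum_{h=2}^\eta \gamma^{h-1}\rho_{h-1}\hat V(x_h) + \gamma^\eta \rho_\eta \hat V(x_{\eta+1}),
\]
which exposes the telescoping doubly-robust structure and makes both expectation and moment calculations tractable.

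For bias validity, I would take expectations of this identity and apply $\EE[\rho_h f] = \EE_{\pitarget}[f]$ term-by-term. The control-variate pairs $\rho_h \hat Q(x_h,a_h)$ and $\rho_{h-1}\hat V(x_h)$ have equal expectation under $\pitarget$ and cancel, leaving $\EE[\hat V_\eta^H] = \sum_{h=1}^\eta \gamma^{h-1}\EE_{\pitarget}[r_h] + \gamma^\eta \EE_{\pitarget}[\hat V(x_{\eta+1})]$. Subtracting from $V(\pitarget) = \sum_{h=1}^H \gamma^{h-1}\EE_{\pitarget}[r_h]$ expresses the bias as $\gamma^\eta \EE_{\pitarget}[V^{\pitarget}(x_{\eta+1}) - \hat V(x_{\eta+1})]$, and bounding both $V^\pitarget$ and $\hat V$ inside $[0,(1-\gamma^{H-\eta})/(1-\gamma)]$ at the tail step gives the stated $\bias(i)$ at $\eta = H-i+1$; monotonicity in $i$ is then immediate from the $\gamma^{H-i+1}$ factor.

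For confidence validity, I would apply Bernstein's inequality to the empirical mean over $n$ i.i.d.\ trajectories and union-bound over the $H+1$ indices, producing the factor $\Delta = \log(2(H+1)/\delta)$. The range bound is a direct triangle-inequality application using $|\rho_h|\leq \pmax^h$, $|r_h|\leq 1$, and $|\hat V|,|\hat Q|\leq \vmax$. The variance bound is the main obstacle and relies on the importance-sampling identity $\EE_{\pilog}[\rho_h^2 X^2] \leq \pmax^h\,\EE_{\pitarget}[X^2] \leq \pmax^h \vmax^2$, which absorbs one copy of $\rho_h$ into $\pmax^h$ and uses the other to switch measures, yielding the $\pmax^h$ (not $\pmax^{2h}$) dependence in $\conf$. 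To avoid an extraneous $O(\eta)$ factor from Cauchy--Schwarz when combining per-step contributions, I would decompose $\hat V_\eta^H - \EE[\hat V_\eta^H]$ as a sum of martingale differences with respect to the natural filtration $\mathcal F_h = \sigma(x_1,a_1,\ldots,a_h,x_{h+1})$, so that per-step conditional variances add directly and match the stated form.

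Finally, confidence monotonicity reduces to comparing consecutive values of $S^v_i \defeq 1 + \sum_{h=1}^{H-i+1}\gamma^{2(h-1)}\pmax^h$ and $S^r_i \defeq 1 + \sum_{h=1}^{H-i+1}\gamma^{h-1}\pmax^h$. The upper bound $\conf(i+1)\leq \conf(i)$ is immediate since both sums are decreasing in $i$. For the lower bound, the single term that disappears when passing from $i$ to $i+1$ equals $\gamma^2\pmax$ (resp.\ $\gamma\pmax$) times the previous term in the sum, so it is at most $\gamma^2\pmax \cdot S^v_{i+1}$ (resp.\ $\gamma\pmax \cdot S^r_{i+1}$). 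This gives $S^v_i \leq (1+\gamma^2\pmax)S^v_{i+1}$ and $S^r_i \leq (1+\gamma\pmax)S^r_{i+1}$; taking square roots in the variance case and using $\sqrt{1+x^2}\leq 1+x$ for $x\geq 0$ (valid since $\pmax\geq 1$) collapses both factors into $1 + \gamma\pmax$, yielding $\kappa = (1+\gamma\pmax)^{-1}$ as claimed.
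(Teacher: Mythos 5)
Your proposal is correct and follows essentially the same route as the paper: the same change-of-measure bias calculation, the same one-factor-of-$\pmax$ trick giving $\pmax^h$ rather than $\pmax^{2h}$ in the variance, Bernstein plus a union bound over the $H+1$ indices, and the same geometric-ratio fact together with $\sqrt{1+\gamma^2\pmax}\leq 1+\gamma\pmax$ for the decay constant $\kappa=(1+\gamma\pmax)^{-1}$. The only cosmetic difference is that you unroll the estimator into an explicit sum and sum martingale-difference variances, whereas the paper keeps the recursion and invokes the recursive variance formula of Jiang and Li, which is the same decomposition in recursive form.
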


\begin{proof}[Proof of~\pref{thm:rl_formal}]
We now turn to the proof. 

\paragraph{Bias analysis.}
By repeatedly applying the tower property, the expectation for
$\hat{V}_\eta^H$ is
\begin{align*}
\EE\sbr{\hat{V}_\eta^H} &= \EE_{\pilog}\sbr{\hat{V}(x_1) + p_1(r_1 + \gamma \hat{V}_\eta^{H-1} - \hat{Q}(x_1,a_1)}\\
& = \EE_{x_1}\sbr{ \hat{V}(x_1) + \EE_{a_1\sim\pilog(x_1),a_{2:H}\sim\pilog}\sbr{p_1(r_1 + \gamma \hat{V}_\eta^{H-1} - \hat{Q}(x_1,a_1)) \mid x_1}}\\
& = \EE_{x_1}\sbr{\hat{V}(x_1) + \EE_{a_1\sim\pitarget(x_1),a_{2:H}\sim\pilog}\sbr{r_1 + \gamma \hat{V}_\eta^{H-1} - \hat{Q}(x_1,a_1) \mid x_1}}\\
& = \EE_{x_1,a_1 \sim\pitarget(x_1)}[r] + \gamma \EE_{x_2\sim\pitarget,a_{2:H}\sim\pilog}\sbr{\hat{V}_\eta^{H-1}}\\
& = ...\\
& = \EE_{\pitarget}\sbr{\sum_{h=1}^\eta \gamma^{h-1} r} + \gamma^\eta \EE_{x_{\eta+1}\sim\pitarget}\sbr{\vdm^\eta}.
\end{align*}
Here, we use that $p_1$ is the one-step importance weight, so it
changes the action distribution from $\pilog$ to $\pitarget$. We also
use the relationship between the direct models $\hat{Q}$ and
$\hat{V}$.  Therefore, the bias is
\begin{align}
\label{eq:rl_bias_exact}
\abr{\EE\sbr{\hat{V}_\eta^H} - V(\pitarget)} = \gamma^\eta
\abr{\vdm^\eta - \EE_{\pitarget}\sbr{V(x_{\eta+1})}} \leq \frac{\gamma^\eta-\gamma^{H}}{1-\gamma} \eqdef \bias(H-\eta+1)
\end{align}
The first identity justifies are choice of $\vdm^\eta$ which attempts
to minimize this bias using the direct model. The inequality here
follows from the fact that rewards are in $[0,1]$, which implies that
values at time $\eta+1$ are in
$\sbr{0,\frac{1-\gamma^{H-\eta}}{1-\gamma}}$. As $\gamma \in (0,1)$,
clearly we have that $\bias(i)$ is monotonically increasing with $i$
increasing. Thus this bias bound is valid.

\paragraph{Variance analysis.}
For the variance calculation, let $\EE_h\sbr{\cdot},\Var_h(\cdot)$
denote expectation and variance conditional on all randomness
\emph{before} time step $h$. Adapting Theorem 1
of~\citet{jiang2015doubly} the variance for $1\leq h < \eta$ is given
by the recursive formula:
\ifthenelse{\equal{\version}{arxiv}}{
\begin{align*}
\Var_h(\hat{V}_\eta^{H+1-h}) &= \Var_h(\EE\sbr{\hat{V}_\eta^{H+1-h}\mid x_h}) + \EE_h\sbr{\Var(p_h\Delta(x_h,a_h)\mid x_h)}\\
 & ~~~~~~ + \EE_h\sbr{p_h^2\Var(r_h)} + \EE_h\sbr{ \gamma^2p_h^2\Var_{h+1}(\hat{V}_\eta^{H-h})},
\end{align*}
}{
\begin{align*}
\Var_h(\hat{V}_\eta^{H+1-h}) = \Var_h(\EE\sbr{\hat{V}_\eta^{H+1-h}\mid x_h}) + \EE_h\sbr{\Var(p_h\Delta(x_h,a_h)\mid x_h)} + \EE_h\sbr{p_h^2\Var(r_h)} + \EE_h\sbr{ \gamma^2p_h^2\Var_{h+1}(\hat{V}_\eta^{H-h})},
\end{align*}
}
where $\Delta(x_h,a_h)\defeq \hat{Q}(x_h,a_h) - Q(x_h,a_h)$. For $h =
\eta$ it is identical, except that in the last term we use $\vdm^\eta$
instead of $\hat{V}_\eta^{H-\eta}$ (which is not defined).

Unrolling the recursion, the full expression for the variance is
\begin{align*}
\Var(\hat{V}_\eta^H) 
& = \sum_{h=1}^\eta \EE\sbr{\gamma^{2(h-1)}\rho^2_{h-1}\Var_h(\EE\sbr{\hat{V}_\eta^{H+1-h}\mid x_h})}\\
& + \sum_{h=1}^\eta\EE\sbr{\gamma^{2(h-1)}\rho^2_{h-1}\EE_h\sbr{\Var(p_h\Delta(x_h,a_h) \mid x_h)}}\\
& + \sum_{h=1}^\eta \EE\sbr{\gamma^{2(h-1)}\rho^2_{h-1} \EE_h\sbr{p_h^2\Var(r_h)}}\\
& + \EE\sbr{\gamma^{2\eta}\rho_\eta^2 \Var_{\eta+1}(\vdm^\eta)}.
\end{align*}

For the variance bound, we do not attempt to obtain the sharpest bound
possible.  Instead, we use the following facts: (1) rewards are in
$[0,1]$, (2) all values, value estimates, and $Q$ are at most
$(1-\gamma)^{-1}\eqdef \vmax$, and (3) for a random variable $X$ that
is bounded by $B$ almost surely, we have $\Var(X) \leq B^2$. Using these
facts in each term gives
\begin{align*}
\Var(\hat{V}_\eta^H) 
& \leq \sum_{h=1}^\eta \EE\sbr{ \gamma^{2(h-1)}\rho^2_{h-1}\vmax^2} + \sum_{h=1}^\eta \EE\sbr{\gamma^{2(h-1)}\rho_h^2\vmax^2} + \sum_{h=1}^\eta \EE\sbr{\gamma^{2(h-1)}\rho^2_h} + \EE\sbr{\gamma^{2\eta}\rho_\eta^2\vmax^2}\\
& = \sum_{h=1}^{\eta+1}\EE\sbr{\gamma^{2(h-1)}\rho_{h-1}^2\vmax^2} + \sum_{h=1}^\eta \EE\sbr{\gamma^{2(h-1)}\rho_h^2(\vmax^2+1)}\\
& \leq 3 \vmax^2 \sum_{h=1}^\eta \EE\sbr{\gamma^{2(h-1)} \rho_h^2} + \vmax^2\end{align*}
Here in the first line we use the three facts we stated above. In the
second line we collect the terms. In the third line we note that
$\gamma^{2(h-1)}\rho_{h-1}^2 \leq \gamma^{2(h-2)} \rho_{h-1}^2$ since
$\gamma \in (0,1)$, so we can re-index the first summation and group
terms again. 

To simplify further, let $\pmax \defeq \sup_{x,a}
\frac{\pitarget(a \mid x)}{\pilog(a \mid x)}$ denote the largest
importance weight and note that as $\EE_h[p_h] = 1$, we have
\begin{align*}
\sum_{h=1}^\eta \EE\sbr{\gamma^{2(h-1)} \rho_h^2} \leq \sum_{h=1}^\eta \EE\sbr{\gamma^{2(h-1)} \pmax \rho_{h-1}^2\EE_h[w_h]} \leq \ldots \leq \sum_{h=1}^\eta \gamma^{2(h-1)}\pmax^h. \end{align*}
Therefore, our variance bound will be
\begin{align*}
\Var(\hat{V}_\eta^H) \leq 3\vmax^2 \rbr{1 + \sum_{h=1}^\eta \gamma^{2(h-1)}\pmax^h}.
\end{align*}

For the range, we obtain the recursion (for $1 \leq h < \eta$):
\begin{align*}
\abr{\hat{V}_\eta^{H+1-h}} \leq \vmax + \pmax(1+\vmax) + \pmax\gamma \abr{\hat{V}_\eta^{H-h}},
\end{align*}
with the terminal condition $\abr{\vdm^\eta} \leq \vmax$. A somewhat crude upper bound is
\begin{align*}
\abr{\hat{V}_\eta^{H}} \leq 3\vmax\rbr{1 + \sum_{h=1}^\eta \gamma^{h-1}\pmax^h},
\end{align*}
which has a similar form to the variance expression. 

Therefore, Bernstein's inequality reveals that with probability $1-\delta$, we have that the $n$-trajectory empirical averages satisfy
\begin{align}
\abr{\hat{V}_\eta^{H} - \EE\hat{V}_\eta^H} \leq \sqrt{\frac{6 \vmax^2\rbr{1+\sum_{h=1}^\eta \gamma^{2(h-1)}\pmax^h} \log(2/\delta)}{n}} + \frac{6 \vmax\rbr{1+ \sum_{h=1}^\eta \gamma^{h-1}\pmax^h}\log(2/\delta)}{3n}.
\label{eq:rl_dev}
\end{align}
This bound is clearly seen to be montonically increasing in $\eta$,
which is montonically decreasing with $i$ as required. The reason is
that when we increase $\eta$ we add one additional non-negative term
to both the variance and range expressions.

Finally, we must verify that the bound does not decrease too
quickly. For this, we first verify the following elementary fact
\begin{fact}
Let $z \geq 0$ and $t \geq 0$ then
\begin{align*}
\frac{1+\sum_{\tau=1}^tz^{\tau}}{1+\sum_{\tau=1}^{t-1}z^\tau} \leq 1+z.
\end{align*}
\end{fact}
\begin{proof}
Using the geometric series formula, we can rewrite
\begin{align*}
\frac{1 + \sum_{\tau=1}^t z^\tau}{1+\sum_{\tau=1}^{t-1}z^\tau} = 1 + \frac{z^t}{1+\sum_{\tau=1}^{t-1}z^\tau} \leq 1+z.\tag*\qedhere
\end{align*}
\end{proof}
Using the above fact, we can see that the variance bound decreases at
rate $(1+\gamma^2 \pmax)$ and the range bound decreases at rate
$(1+\gamma\pmax)$. The range bound dominates here, since
\begin{align*}
\sqrt{1+\gamma^2\pmax} \leq \sqrt{1+\gamma^2\pmax^2} \leq 1+\gamma\pmax
\end{align*}
Therefore, we may take the decay constant to be $1/(1+\gamma\pmax)$ to
verify~\pref{assum:monotonicity}.
\end{proof}

\section{Details for continuous contextual bandits experiments}
\label{app:ccb_exp}

\subsection{The simulation environment.}

Here, we explain some of the important details of the simulation
environment. The simulator is initialized with a $d_x$ dimensional
context space and action space $[0,1]^d$ for some parameter $d$. For
our experiments we simply take $d=1$. There is also a hidden parameter
matrix $\beta^\star \sim \Ncal(0,I)$ with $\beta^\star \in
\RR^{d\times d_x}$. In each round, contexts are sampled iid from
$\Ncal(0,I)$, then the optimal action $a^\star(x) \defeq
\sigma(\beta^\star x)$, where $\sigma(z) = \tfrac{e^z}{e^z+1}$ is the
standard sigmoid, and the function is applied component-wise. This
optimal action $a^\star$ is used in the design of the reward
functions.

We consider two different reward functions called ``absolute value''
and ``quadratic.'' The first is simply $\ell(a) \defeq 1 - \min(L
\nbr{a- a^\star(x)}_1,1)$, while the latter is $\ell(a) \defeq 1 -
\min(L/4 \sum_{j=1}^d (a_j - a^\star_j(x))^2, 1)$. Here $L$ is the Lipschitz
constant, which is also a configurable.

For policies, the uniform logging policy simply chooses $a \sim
\textrm{Unif}([0,1]^d)$ on each round. Other logging and target
policies are trained via regression on 10
vector-valued regression samples $(x,a^\star(x)+\Ncal(0,0.5\cdot I))$
where $x \sim \Ncal(0,I)$. We use two different regression models:
linear + sigmoid implemented in PyTorch, and a decision 
tree implemented in scikit-learn. Both regression procedures yield
deterministic policies, and in our experiments we take this policies
to be $\pitarget$.

For $\pilog$ we implement two softening techniques
following~\citet{farajtabar2018more}, called ``friendly'' and
``adversarial,'' and both techniques take two parameters
$\alpha,\beta$. Both methods are defined for discrete action spaces,
and to adapt to the continuous setting we partition the continuous
action space into $m$ bins (for one-dimensional spaces). We round the
deterministic action chosen by the regression model to its associated
bin, run the softening procedure to choose a (potentially different)
bin, and then sample an action uniformly from this bin. For higher
dimensional action spaces, we discretize each dimension individually,
so the softening results in a product measure.

Friendly softening with discrete actions is implemented as follows. We
sample $U \sim \textrm{Unif}([-0.5,0.5])$ and then the updated action
is $\pi_{\textrm{det,disc}}(x)$ with probability $\alpha+\beta U$ and
it is uniform over the remaining discrete actions with the remaining
probability. Here $\pi_{\textrm{det,disc}}$ is the deterministic
policy obtained by the regression model, discretized to one of the $m$
bins. Adversarial softening instead is uniform over all discrete
actions with probability $1-(\alpha+\beta U)$ and it is uniform over
all but $\pi_{\textrm{det,disc}}(x)$ with the remaining probability.
In both cases, once we have a discrete action, we sample a continuous
action from the corresponding bin.

The simulator also supports two different kernel functions:
Epanechnikov and boxcar. The boxcar kernel is given by $K(u)
=\frac{1}{2} \one\{\abr{u} \leq 1\}$, while Epanechnikov is $K(u) =
0.75\cdot(1-u^2)\one\{\abr{u} \leq 1\}$. We address boundary bias by
normalizing the kernel appropriately, as opposed to forcing the target
policy to choose actions in the interior.
This issue is also discussed in \citet{kallus2018policy}. 

Finally, we also vary the number of logged samples and the Lipschitz
constant of the loss functions.

\subsection{Reproducibility Checklist}

\indent \textbf{Data collection process.} All data are synthetically generated as described above.

\noindent \textbf{Dataset and Simulation Environment.} We will make the simulation environment publicly available.

\noindent \textbf{Excluded Data.} No excluded data.

\noindent \textbf{Training/Validation/Testing allocation.} 
There is no training/validation/testing setup in off policy
evaluation. Instead all logged data are used for evaluation.

\noindent \textbf{Hyper-parameters.} 
Hyperparameters used in the experimental conditions are: $n \in
10^{1:5}$, $h \in \{2^{-(1:7)}$, $L \in \{0.1, 0.3,1,3,10\}$, in addition to
the other configurable parameters (e.g., softening technique, kernel,
logging policy, target policy). 

\noindent \textbf{Evaluation runs.} 
There are 1000 conditions, each with 30 replicates with
different random seeds.

\noindent \textbf{Description of experiments.} 
For each condition, determined by logging policy, softening technique,
target policy, sample size, lipschitz constant, reward function, and
kernel type, we generate $n$ logged samples following $\pilog$, and
100k samples from $\pitarget$ to estimate the ground truth
$V(\pitarget)$. All fixed-bandwidth estimators and \lepski are
calculated based on the same logged data. The MSE is estimated by
averaging across the 30 replicates, each with different random seed.

For the learning curve in the right panel of~\pref{fig:cb_exp} the
specific condition shown is: uniform logging policy, linear+sigmoid
target policy, $L=3$, absolute value reward, boxcar kernel. MSE
estimates are measured at $n = \{1,3,7\}\times
10^{1:3}\cup\{10,000\}$. We perform 100 replicates for this
experiment.

\noindent \textbf{Measure and Statistics.} 
Results are shown in~\pref{fig:cb_exp}. Statistics are based on
empirical CDF calculated by aggregating the 1000 conditions. Typically
there are no error bars for such plots. Pairwise comparison is based
on paired $t$-test over all pair of methods and conditions, with significance level $0.05$. The learning curve is based on 100
replicates, with error bar corresponding to $\pm 2$ standard errors
shown in the plots.

\noindent \textbf{Computing infrastructure.} 
Experiments were run on Microsoft Azure. 

\section{Details for reinforcement learning experiments}
\label{app:rl_exp}
\subsection{Experiment Details}
\noindent \textbf{Environment Description.} 
We provide brief environment description below. More details can be
found in \citet{thomas2016data, voloshin2019empirical,
  brockman2016openai}.
\begin{packed_itemize}
	\item Mountain car is a classical benchmark from OpenAI Gym. We
      make the same modification as \citet{voloshin2019empirical}. The
      domain has 2-dimensional state space (position and velocity) and
      one-dimensional action $\{\textrm{left,nothing,right}\}$. The
      reward is $r=-1$ for each timestep before reaching the goal. The
      initial state has position uniformly distributed in the discrete set
      $\{-0.4,-0.5,-0.6\}$ with velocity $0$. The horizon is set to be
      $H=250$ and there is an absorbing state at $(0.5,0)$. The domain has
      deterministic dynamics, as well as deterministic, dense reward.
	\item Graph and Graph-POMDP are adopted from
      \citet{voloshin2019empirical}. The Graph domain has horizon 16,
      state space $\{0,1,2,\cdots,31\}$ and action space
      $\{0,1\}$. The initial state is $x_0=0$, and we have the
      state-independent stochastic transition model with
      $\PP(x_{t+1}=2t+1|a=0)=0.75$, $\PP(x_{t+1}=2t+2|a=0)=0.25$,
      $\PP(x_{t+1}=2t+1|a=1)=0.25$, $\PP(x_{t+1}=2t+2|a=1)=0.75$. In
      the dense reward configuration, we have $r(x_t, a_t, x_{t+1}) =
      2(x_{t+1}\mod 2)-1$ $\forall t\leq T$. The sparse reward setting
      has $r(x_t,a_t,x_{t+1})=0$ $\forall t< T-1$ with reward only at
      the last time step, according to the dense reward function. We
      also consider a stochastic reward setting, where we change the
      reward to be $r(x_t, a_t, x_{t+1})\sim \Ncal(2(x_{t+1}\mod 2)-1,
      1)$. Graph-POMDP is a modified version of Graph where states are
      grouped into 6 groups. Only the group information is observed,
      so the states are aliased.
	\item Gridworld is also from \citet{voloshin2019empirical}. The
      state space is an $8\times 8$ grid with four actions [up, down,
        left, right]. The initial state distribution is uniform over
      the left column and top row, while the goal is in the bottom
      right corner. The horizon length is 25. The states belongs to
      four categories: Field, Hole, Goal, Others. The reward at Field
      is -0.005, Hole is -0.5, Goal is 1 and Others is -0.01. The
      exact map can be found in \citet{voloshin2019empirical}.
	\item Hybrid Domain is from \citet{thomas2016data}. It is a
      composition of two other domains from the same study, called
      ModelWin and ModelFail. The ModelFail domain has horizon 2, four
      states $\{s_0, s_1, s_2, s_a\}$ and two actions $\{0,1\}$. The
      agent starts at $s_0$, goes to $s_1$ with reward 1 if $a=0$, and
      goes to $s_2$ with reward if $a=1$. Then it transitions to the
      absorbing state $s_a$. This environment has partial
      observability so that $\{s_0,s_1,s_2\}$ are aliased together.

      In the hybrid domain the absorbing state $s_a$ is replaced with
      a new state $s_1$ in the ModelWin domain.
      This domain has four states $\{s_1,s_2,s_3,s_a\}$. The action
      space is $\{0,1\}$. The agent starts from $s_1$. Upon taking
      action $a=0$, it goes to $s_2$ with probability 0.6 and receives
      reward 1, and goes to $s_3$ with probability 0.4 and reward
      -1. If $a=1$, it does the opposite. From $s_2$ and $s_3$ the
      agent deterministically transitions to $s_1$ with 0 reward.  do
      a deterministic transition back to $s_1$ with 0 reward. The
      horizon here is 20 and $x_{20} = s_a$. The states are fully
      observable.
\end{packed_itemize}

\noindent \paragraph{Models.} 
Instead of experiment with all possible approaches for direct
modeling, which is quite burdensome, we follow the high-level
guidelines provided in Table 3 of \citet{voloshin2019empirical}'s
paper: for Graph, PO-Graph, and Mountain Car we use FQE because these
environments are stochastic and have severe mismatch between logging
and target policy. In contrast, Gridworld has moderate policy
mismatch, so we use $Q^\pi(\lambda)$. For the Hybrid domain, we use a
simple maximum-likelihood approximate model to predict the full
transition operator and rewards, and plan in the model to estimate the
value function.

\paragraph{Policy.} 
For Gridworld and Mountain Car, we use $\epsilon$-Greedy polices as
logging and target policies. To derive these, we first train a base
policy using value iteration and then we take $\pi(a^\star |x) =
1-\epsilon$ and $\pi(a \mid x) = \epsilon/(|\Acal|-1)$ for $a \ne
a^\star(x)$, where $a^\star(x) = \argmax \hat{Q}(x,a)$ for the learned
$\hat{Q}$ function.
In Gridworld, we take the
following policy pairs: $[(1, 0.1),
  (0.6,0.1),(0.2,0.1),(0.1,0.2),(0.1,0.6)]$, where the first argument
is the $\epsilon$ parameter for . For Mountain Car domain, we take the
following policy pairs: $[(0.1, 0), (1,0),(1,0.1),(0.1,1)]$ where the
first argument is the parameter for $\pilog$ and the second is for
$\pitarget$. For the Graph and Graph-POMDP domain, both logging and
target policies are static polices with probability $p$ going left
(marked as $a=0$) and probability $1-p$ going right (marked as $a=1$),
i.e., $\pi(a=0|x) =p$ and $\pi(a=1|x)=1-p$ $\forall x$. In both
environments, we vary $p$ of the logging policy to be $0.2$ and $0.6$,
while setting $p$ for target policy to be $0.8$. For the Hybrid
domain, we use the same policy as \citet{thomas2016data}. For the
first ModelFail part, $\pilog(a=0)=0.88$ and $\pilog(a=1)=0.12$, while
the target policy does the opposite. For the second ModelWin part,
$\pilog(a=0|s_1)=0.73$ and $\pilog(a=1|s_1)=0.27$, and the target
policy does the opposite. For both policies, they select actions
uniformly when $s\in\{s_2,s_3\}$.

\paragraph{Other parameters.} 
For both the Graph and Graph-POMDP, we use $\gamma=0.98$ and $N\in
2^{7:10}$. For Gridworld, $\gamma=0.99$ and $N\in2^{7:9}$. For
Mountain Car, $\gamma=0.96$ and $N\in 2^{8:10}$. For Hybrid,
$\gamma=0.99$ and $N\in\{10,20,50,\cdots, 10000, 20000, 50000\}$. Each
condition is averaged over 100 replicates.

\subsection{Reproducibility Checklist}
\noindent \textbf{Data collection process.} 
All data are synthetically generated as described above.

\noindent \textbf{Dataset and Simulation Environment.}  The Mountain
Car environment is downloadable from OpenAI
\citep{brockman2016openai}. Graph, Graph-POMDP, Gridworld, and the
Hybrid domain are available at
\url{https://github.com/clvoloshin/OPE-tools}, which is the supporting
code for~\citet{voloshin2019empirical}.

\noindent \textbf{Excluded Data.} No excluded data.

\noindent \textbf{Training/Validation/Testing allocation.} 
There is no training/validation/testing setup in off policy
evaluation. Instead all logged data are used for evaluation.

\noindent \textbf{Hyper-parameters.} 
Hyperparameters (apart from those optimized by \lepski) are optimized
followng the guidelines of~\citet{voloshin2019empirical}. For
MountainCar, the direct model is trained using a 2-layer fully connected neural
network with hidden units 64 and 32. The batch size is 32 and convergence is set to be $1e-4$,
network weights are initialized with truncated Normal$(0,0.1)$. For
tabular models, convergence of Graph and Graph-POMDP is $1e-5$ and
Gridworld is $4e-4$.

\noindent \textbf{Evaluation runs.} 
All conditions have 100 replicates with different random seeds.

\noindent \textbf{Description of experiments.} 
For each condition, determined by the choice of environment,
stochastic/deterministic reward, sparse/dense reward,
stochastic/deterministic transition model, logging policy $\pilog$,
target policy $\pitarget$ and sample size $N$. We generate $N$ logged
trajectories following $\pilog$, and 10000 samples from $\pitarget$ to
compute the ground truth $V(\pitarget)$. All baselines and \lepski are
calculated based on the same logged data. The MSE is estimated by
averaging across the 100 replicates, each with different random seed.

\noindent \textbf{Measure and Statistics.} 
Results are shown in~\pref{fig:rl_fig}. Statistics are based on
empirical CDF calculated by aggregating the 106 conditions. Typically
there are no error bars in ECDF plots. Pairwise comparison is based on
paired $t$-test over all pair of methods over all conditions. Each
test has significance level $0.05$. Learning curve is based on Hybrid
domain with 128 replicates, with error bar corresponding to $\pm 2$
standard errors shown in the plots.

\noindent \textbf{Computing infrastructure.} 
RL experiments were conducted in a Linux compute cluster.
 
\bibliography{refs}
\vfill

\end{document}